\documentclass{article}

\usepackage{microtype}
\usepackage{graphicx}
\usepackage{subfigure}
\usepackage{booktabs} 

\usepackage{hyperref}

\usepackage{bbm}
\usepackage{amsmath}
\usepackage{amssymb}
\usepackage{mathtools}
\usepackage{amsthm}
\usepackage{amsfonts}

\newcommand{\adv}{\mathrm{adv}}

\theoremstyle{plain}
\newtheorem{theorem}{Theorem}[section]
\newtheorem{proposition}[theorem]{Proposition}
\newtheorem{lemma}[theorem]{Lemma}

\theoremstyle{definition}

\theoremstyle{remark}

\usepackage[accepted]{icml2026}

\usepackage[capitalize,noabbrev]{cleveref}

\usepackage[textsize=tiny]{todonotes}

\icmltitlerunning{Certified Robustness under Heterogeneous Perturbations}

\begin{document}

\onecolumn

\twocolumn[
    \icmltitle{Certified Robustness under Heterogeneous Perturbations \\ via Hybrid Randomized Smoothing}




    \begin{icmlauthorlist}
        \icmlauthor{Blaise Delattre}{isct}
        \icmlauthor{Hengyu Wu}{isct}
        \icmlauthor{Paul Caillon}{psl}
        \icmlauthor{Wei Yang Bryan Lim}{ntu}
        \icmlauthor{Yang Cao}{isct}
    \end{icmlauthorlist}

    \icmlaffiliation{isct}{Department of Computer Science, School of Computing, Institute of Science Tokyo, Tokyo, Japan}
    \icmlaffiliation{ntu}{College of Computing and Data Science, Nanyang Technological University, Singapore}
    \icmlaffiliation{psl}{PSL University, Paris, France}

    \icmlcorrespondingauthor{Blaise Delattre}{delattre.b.aa@m.titech.ac.jp}

    \icmlkeywords{Machine Learning, Certified Robustness, Randomized Smoothing}

    \vskip 0.3in

]



\printAffiliationsAndNotice{\icmlEqualContribution} 

\begin{abstract}
    Randomized smoothing provides strong, model-agnostic robustness certificates, but
    existing guarantees are limited to single modalities, treating continuous and
    discrete inputs in isolation.
    This limitation becomes critical in multimodal models, where decisions depend on
    cross-modal semantics and adversaries can jointly perturb heterogeneous inputs,
    rendering unimodal certificates insufficient.
    We introduce a unified randomized smoothing framework for mixed discrete--continuous
    inputs based on an analytically tractable Neyman--Pearson formulation of the joint
    worst-case problem.
    By analyzing the joint likelihood ordering induced by factorized discrete and
    continuous noise, our approach yields a closed-form, one-dimensional certificate
    that strictly generalizes both Gaussian (image-only) and discrete (text-only)
    randomized smoothing.
    We validate the framework on multimodal safety filtering, providing, to our
    knowledge, the first model-agnostic Neyman--Pearson certificate for joint
    discrete-token and continuous-image perturbations in interaction-dependent
    text--image safety filtering.
\end{abstract}

\section{Introduction}

Randomized smoothing (RS) provides strong, model-agnostic robustness certificates
by certifying the stability of a \emph{smoothed} predictor under input
perturbations~\citep{cohen2019certified, salman2019provably}.
Its theoretical guarantees, however, are fundamentally derived under
\emph{homogeneous} perturbation models, where adversarial uncertainty is confined
to a single modality and induces a single likelihood-ratio ordering.
As a result, existing RS theory treats continuous and discrete inputs~\citep{ye2020safer, zeng2023certified} in
isolation and offers no principled way to reason about adversaries that act
simultaneously on heterogeneous input channels.

This limitation has become critical in modern learning systems, where decisions
increasingly depend on \emph{joint} reasoning over mixed discrete--continuous
inputs~\citep{xu2025sportstraj, wu2025membership}.
In such settings, adversarial perturbations do not merely act on multiple
modalities, but interact through their combined semantic interpretation~\citep{yin2023vlattack}.
Certifying each modality independently or collapsing heterogeneous perturbations
into a single norm fails to capture this joint worst-case behavior.
The missing theoretical ingredient is therefore not joint perturbations per se,
but a principled \emph{joint likelihood ordering} under heterogeneous noise
models, which classical randomized smoothing does not provide.

\begin{figure}[ht]
    \centering
    \includegraphics[width=\linewidth]{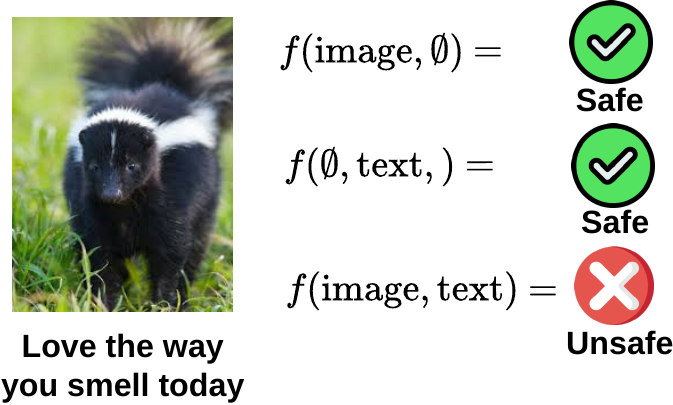}
    \caption{
        Conceptual interaction-only safety behavior.
        Each modality is classified as Safe in isolation, while the joint
        text--image input is classified as Unsafe.
        This illustrates why the certified object must be the joint multimodal
        decision rather than two unimodal decisions.
    }
    \label{fig:emergent_multimodal_unsafety}
\end{figure}

Multimodal safety filtering in large foundation models provides a canonical and
high-stakes instance of this problem.
Large multimodal models are increasingly deployed in safety-critical domains such
as medical imaging~\citep{nam2025multimodal}, robotics~\citep{Kawaharazuka2025Vision}, and autonomous systems~\citep{cui2024Survey}, where formal guarantees are
required rather than empirical robustness alone.
Their open-ended interfaces expose them to \emph{prompt-injection attacks} that
manipulate both discrete prompt tokens and continuous perceptual
representations~\citep{clusmann2025prompt, carlini2023aligned, zou2023universal,
    liu2024formalizing}.
Crucially, safety violations in vision--language systems are often
\emph{interaction-level} phenomena: neither the image nor the text is unsafe in
isolation, while their composition is.
This makes unimodal certification intrinsically insufficient, as the object of
interest is the stability of the \emph{joint} multimodal decision
(Figure~\ref{fig:emergent_multimodal_unsafety}).

We develop a modality-agnostic randomized smoothing framework for heterogeneous perturbations, designed to certify robustness under joint discrete and continuous adversarial actions. Multimodal safety provides a motivating and practically relevant instance, without restricting the scope of the framework. Within this setting, we introduce a unified hybrid randomized smoothing formulation and characterize the joint worst-case behavior of the smoothed classifier through a Neyman–Pearson analysis.
A key insight is that the presence of a continuous component fundamentally changes the structure of the discrete worst-case problem: purely discrete smoothing yields non-invertible likelihood orderings, while continuity restores a tractable one-dimensional characterization of the joint robustness trade-off.
Our contributions are as follows:
\begin{itemize}

    \item \textbf{Hybrid Neyman--Pearson theory for heterogeneous perturbations.}
          We derive, in Section~\ref{subsec:main_result}, an exact Neyman--Pearson characterization for joint discrete--continuous randomized smoothing, yielding a one-dimensional, continuous, and invertible likelihood-ratio CDF that strictly generalizes both discrete knapsack-based and Gaussian randomized smoothing.

    \item \textbf{Provably sound hybrid certification procedure.}
          In Section~\ref{subsec:implementation}, we instantiate the theoretical characterization as a practical certification algorithm with guaranteed monotonicity and conservative numerical guarantees, based on one-dimensional root finding and worst-case aggregation over discrete attacks.

    \item \textbf{Empirical validation in multimodal and mixed-input regimes.}
          We empirically validate the proposed framework in two complementary settings.
          First, we study certified multimodal AI safety under joint text--image perturbations (Section~\ref{sec:exp_vlm}), and assess the tightness of the certificates using adaptive empirical attacks (Section~\ref{sec:empirical_attack}), on a scoped interaction-only evaluation split where each modality is individually safe but jointly unsafe.
          Second, we consider a mixed-feature tabular classification task, demonstrating that the hybrid certificate applies beyond vision--language models (Section~\ref{sec:exp_tabular}).

\end{itemize}

\textbf{Conflict of Interest Disclosure.}
The authors declare no financial or other substantive conflicts of
interest that could reasonably be perceived to influence this work.

\section{Related Works}
\label{sec:related_work}

\textbf{Prompt Injection and Empirical Defense.}
Prompt injection exploits the inability of LLMs to reliably distinguish trusted
instructions from untrusted input, allowing adversaries to override the intended
task via malicious prompt modifications.
\citet{liu2024formalizing} provide an early formal treatment and a unified benchmark
covering multiple attack and defense settings.
Subsequent work shows that prompt injection can be \emph{universal} and transferable.
In particular, \citet{zou2023universal} demonstrate that a single adversarial suffix
can bypass safety alignment across prompts and models, including black-box systems,
exposing fundamental weaknesses in current guardrails.
Proposed defenses range from architectural separation, such as structured queries
that isolate instructions from data \citep{chen2025struq}, to system-level mechanisms
like capability-based control in LLM agents \citep{debenedetti2025camel}, and test-time
input transformations such as DefensiveTokens \citep{chen2025defensivetoken}.
However, many of these defenses degrade under adaptive attackers
\citep{zhan2025adaptive}, remain vulnerable in cross-lingual settings
\citep{yong2023lowresource}, or lack formal guarantees \citep{khomsky2024prompt}.
These limitations motivate certified approaches that seek provable guarantees against prompt injection rather than empirical defenses alone.

\textbf{Safety Detectors for Certified Robustness to Prompt Injection.}
\citet{kumar2024certifying} frame prompt-injection robustness within a
model-agnostic randomized smoothing perspective.
\citet{chen2025worstcaserobustnesslargelanguage} further derive tight
worst-case certificates for randomized LLM safety defenses via fractional
and 0--1 knapsack solvers. These works certify discrete prompt
perturbations, whereas we certify heterogeneous attacks combining
discrete token changes and continuous $\ell_2$ image perturbations under
a single joint Neyman--Pearson analysis.

\textbf{Multimodal Adversarial Attacks and Defenses.}
Prior work shows that jointly optimized multimodal attacks are
stronger than unimodal ones, as they exploit cross-modal alignments rather than
independent modality weaknesses. Co-Attack demonstrates that perturbing image or
text alone is often ineffective, while coordinated perturbations significantly
increase attack success on vision--language models~\citep{zhang2022coattack}.
VLAttack further confirms that breaking image--text correlation typically
requires joint optimization~\citep{yin2023vlattack}. Subsequent work highlights
additional amplification mechanisms, including set-level alignment exploitation
and collaborative interaction attacks such as CrossFire~\citep{lu2023sga,dou2024crossfire}.

Existing certified defenses remain limited. Randomized smoothing has been applied
to VLMs by treating them effectively as vision models, with text handled in
embedding space~\citep{seferis2025randomizedVLM,nirala2024fast}. COMMIT certifies
multi-sensor fusion systems via shared-latent continuous noise, targeting
geometric transformations rather than heterogeneous text--image perturbations,
and is not a classical randomized smoothing approach~\citep{huang2025commit}.
CertTA provides an ad hoc certification method for traffic networks based on
composed kernels~\citep{yan2025certta}, following a similar underlying principle
to worst-case robustness analyses for LLMs~\citep{chen2025worstcaserobustnesslargelanguage}.

\textbf{Comparison with Multimodal Certified Robustness.}
MMCert~\citep{wang2024mmcert} gives a certified defense for multimodal models
by independently subsampling basic elements from each modality and aggregating
predictions.
Its threat model is $\ell_0$-like across modalities: the attacker may add,
delete, or modify a bounded number of basic elements in each modality.
This differs from our setting, where the perturbation geometry is heterogeneous,
with token-level discrete perturbations and continuous $\ell_2$ image
perturbations.
Knowledge Continuity~\citep{sun2024knowledge} instead provides probabilistic
robustness guarantees through a representation-space continuity measure that is
designed to be domain-independent.
This is complementary to randomized smoothing: it is not a worst-case
Neyman--Pearson certificate over an explicit joint $(d,\epsilon)$ perturbation
budget.
These methods address related multimodal certification goals, but their
guarantees do not directly instantiate the heterogeneous discrete--continuous
threat model studied here.
An empirical comparison with an MMCert-style subsampling baseline on our
interaction-only Hateful Memes evaluation is provided in
Appendix~\ref{app:mmcert_comparison}: under independent subsampling at any
keep ratio, the smoothed unsafe probability collapses below the certification
threshold and yields zero certifiable examples on this subset.

\textbf{Randomized Smoothing.}
Randomized smoothing (RS) certifies robustness by replacing a base predictor
with its noisy expectation.
In continuous settings, Gaussian smoothing yields closed-form \( \ell_2 \)
guarantees via a Neyman--Pearson (NP) analysis of shifted Gaussians, exploiting the
monotone likelihood ratio of the noise channel~\citep{cohen2019certified,salman2019provably, lecuyer2019pixelDP, li2018secondorder},
with extensions to alternative norms and noise distributions~\citep{levine2020robustness,levine2021improved}.
For discrete inputs, RS has been applied to text through
stochastic kernels such as synonym substitutions, character noise, or
masking~\citep{jia2019certified,ye2020safer,zeng2023certified}, producing
nontrivial \( \ell_0 \) certificates.
However, unlike the continuous case, discrete likelihood ratios are atomic and
induce non-invertible NP decision rules, precluding closed-form
certificates.
Recent worst-case analyses formalize this gap and show that certification reduces
to a combinatorial ordering over likelihood ratios rather than a one-dimensional
inversion~\citep{lee2019tight,chen2025worstcaserobustnesslargelanguage}.

As a result, existing RS theory treats continuous and discrete
perturbations in isolation and offers no unified certification framework for
heterogeneous noise models.

\section{From Neyman–Pearson to Knapsack: What Breaks in Hybrid Inputs.}

We briefly recall a standard fractional-knapsack formulation of randomized
smoothing, which serves as a technical starting point.
All results up to Eq.~\eqref{eq:knapsack} follow existing worst-case analyses and
are included solely to make the hybrid extension self-contained.

\textbf{Certified Robustness and Randomized Smoothing.}
We consider binary classification over an input space $\mathcal X$, consisting of either continuous components (e.g.\ images) or discrete components (e.g.\ text).
A classifier $f:\mathcal X\to\{0,1\}$ maps inputs to binary decisions.
For illustration, $f(x)=1$ denotes that the input is flagged as unsafe.
%
$f
$ is robust at $x$ up to radius $\delta$ if
$f(x_{\adv})=f(x)$ for all $x_{\adv}$ such that $D(x,x_{\adv})\le \delta$, where $D$
is a task-specific perturbation metric.
The certified radius is
\[
    \resizebox{\linewidth}{!}{$
        r_{\mathrm{cert}}(x)
        =\sup\{\delta \ge 0:\ \forall x_{\adv},\ D(x,x_{\adv})\le \delta,\ f(x_{\adv})=f(x)\}.
    $}
\]

Randomized smoothing~\citep{cohen2019certified} constructs a smoothed classifier
$g(x)=\mathbb{E}_{z\sim p(\cdot\mid x)}[f(z)]$, where $p(z\mid x)$ is a noise
channel centered at $x$.
For binary classifiers, $g(x)$ equals the probability of the positive class.
Let $p_A=g(x)$ denote this clean value, estimated via Monte Carlo sampling.
For a perturbation budget $\delta$, define the worst-case smoothed value
\[
p_{\adv}(\delta)=\inf_{D(x,x_{\adv})\le \delta} g(x_{\adv}) \ .
\]
For a decision threshold \( \tau \in (0,1) \), define the randomized-smoothing certified radius as
\[
r_{\mathrm{RS}}(x;\tau)
= \sup \left\{ \delta \ge 0 :\ p_{\adv}(\delta) > \tau \right\}.
\]
The classical majority-voting certificate corresponds to \( \tau = \tfrac{1}{2} \).
Here, we focus on certifying "Unsafe" predictions, requiring the smoothed score to remain above a small threshold \( \tau \ll \tfrac{1}{2} \) under worst-case perturbations. The parameter \( \tau \) thus controls the strictness of certification: smaller values yield larger certified radii but tolerate weaker confidence, while larger values enforce stricter guarantees at the cost of reduced coverage, see Appendix~\ref{sec:tau} for more details.

\textbf{Neyman--Pearson Structure of the Worst Case.}
Computing $p_{\adv}(\delta)$ amounts to characterizing the minimal expectation of
$f$ under the adversarial channel $p(\cdot\mid x_{\adv})$ subject to a fixed
expectation under $p(\cdot\mid x)$.
To characterize this worst case, we temporarily relax the discrete classifier $f$
to an arbitrary measurable function $h : \mathcal X \to [0,1]$ with fixed
expectation under $p(\cdot \mid x)$, which serves solely as a proof device.
Let $\gamma(z)=p(z\mid x_{\adv})/p(z\mid x)$ denote the likelihood ratio.
The Neyman--Pearson lemma characterizes the extremal solution.
\begin{lemma}[Neyman--Pearson for randomized smoothing~\citep{neyman1933problem}]
    \label{lem:np-rs}
    Among all measurable $h:\mathcal X\to[0,1]$ satisfying
    $\mathbb{E}_{z\sim p(\cdot\mid x)}[h(z)]=p_A$, the minimizer of
    $\mathbb{E}_{z\sim p(\cdot\mid x_{\adv})}[h(z)]$ is
    \[
        h^\star(z)=\mathbf 1_{\{\gamma(z)\le t^\star\}},
    \]
    where $t^\star$ enforces
    $\mathbb{E}_{z\sim p(\cdot\mid x)}[h^\star(z)]=p_A$.
\end{lemma}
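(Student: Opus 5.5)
We prove the statement by the standard exchange (Lagrangian) argument for Neyman--Pearson, written in the notation of the randomized smoothing setup. Write $\mu=p(\cdot\mid x)$ and $\nu=p(\cdot\mid x_{\adv})$. We assume $\nu\ll\mu$, so that $\gamma=d\nu/d\mu$ is well defined $\mu$-a.e.; if this fails, the singular part of $\nu$ is assigned $\gamma=+\infty$ and is never included in $\{\gamma\le t^\star\}$, which only helps the minimization. The objective can then be written as $\mathbb{E}_\nu[h]=\mathbb{E}_\mu[h\gamma]$, plus the singular contribution. Throughout, $h^\star$ denotes the threshold rule $\mathbf 1_{\{\gamma\le t^\star\}}$, possibly randomized on the level set $\{\gamma=t^\star\}$.

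\textbf{Step 1: choice of threshold.} Consider the function $F(t)=\mu(\gamma\le t)$. It is nondecreasing and right-continuous, with $F(t)\to \mu(\gamma<\infty)=1$ as $t\to\infty$ and $F(t)\to 0$ as $t\to-\infty$. Define $t^\star=\inf\{t: F(t)\ge p_A\}$. Then
\[
\mu(\gamma<t^\star)\le p_A\le \mu(\gamma\le t^\star).
\]
When $\gamma$ has an atom at $t^\star$, the two inequalities may both be strict. In that case we take $h^\star=\mathbf 1_{\{\gamma<t^\star\}}+\theta\,\mathbf 1_{\{\gamma=t^\star\}}$, with $\theta\in[0,1]$ chosen so that $\mathbb{E}_\mu[h^\star]=p_A$ exactly. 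In the continuous settings that are the paper's real target, the pushforward of $\mu$ under $\gamma$ is atomless, so $\theta$ plays no role and $h^\star$ is literally the indicator in the statement.

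\textbf{Step 2: exchange inequality.} Fix any feasible measurable $h:\mathcal X\to[0,1]$ with $\mathbb{E}_\mu[h]=p_A$. We claim the pointwise inequality
\[
(h^\star(z)-h(z))\,(\gamma(z)-t^\star)\le 0\quad\text{for all } z.
\]
This holds because on $\{\gamma<t^\star\}$ we have $h^\star=1\ge h$, on $\{\gamma>t^\star\}$ we have $h^\star=0\le h$, and on $\{\gamma=t^\star\}$ the second factor vanishes. Integrating against $\mu$ gives
\[
\mathbb{E}_\mu[(h^\star-h)\gamma]\le t^\star\,\mathbb{E}_\mu[h^\star-h]=0,
\]
where the last equality uses that both functions satisfy the constraint. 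Hence $\mathbb{E}_\nu[h^\star]\le \mathbb{E}_\nu[h]$. The singular part of $\nu$ contributes $0$ for $h^\star$ and a nonnegative amount for $h$, so the inequality survives in the general case. This proves that $h^\star$ is a minimizer.

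\textbf{Main obstacle.} The delicate point is Step 1, namely the existence of a threshold meeting the constraint with equality. This is exactly where discrete noise breaks the pure indicator form, since $\gamma$ has atoms, and it is what motivates the fractional-knapsack reformulation that follows. We handle it by the randomized tie-breaking with $\theta$ described above. We then remark that the lemma's indicator form holds verbatim whenever $\mu(\gamma=t^\star)=0$, and holds in general if $h^\star$ is allowed to take the value $\theta$ on the tie set; this is permissible because $h$ ranges over $[0,1]$-valued functions, not only indicators. We would also note that integrability is not an issue: $\mathbb{E}_\mu[\gamma]\le 1$ and every $h$ is bounded.
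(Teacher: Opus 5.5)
Your proof is correct. It is the standard exchange argument behind the classical Neyman--Pearson lemma, which the paper does not reprove but only cites; in Step~1 of the proof of Theorem~\ref{thm:main_rigorous} it invokes exactly your randomized form $\mathbf 1\{\gamma<t\}+\alpha\,\mathbf 1\{\gamma=t\}$, and drops the tie term only after showing the level sets are null, just as you observe.

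Two small additions would complete it. First, state $p_A\in(0,1)$ so that $t^\star$ is finite. Second, record the equality case of your exchange inequality: it forces any other minimizer to agree with $h^\star$ $(\mu+\nu)$-a.e. off $\{\gamma=t^\star\}$, which gives the uniqueness implicit in ``the minimizer'' when there are no ties.
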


In the continuous Gaussian setting, this
yields the closed-form bound
$p_{\adv}(\delta)=\Phi(\Phi^{-1}(p_A)-\delta/\sigma)$ and, for threshold $\tau$,
the  bound on 
$r_{\mathrm{cert}}$,
\[
r_{\mathrm{RS}}(x;\tau)=\sigma\bigl(\Phi^{-1}(p_A)-\Phi^{-1}(\tau)\bigr)
\], with $\sigma^2$ the smoothing variance and $\Phi$ the Gaussian cumulative distribution function.

\textbf{Fractional-Knapsack Formulation for Discrete Inputs.}
For discrete channels (e.g.\ text), the likelihood ratio $\gamma$ has atoms, so the
Neyman--Pearson constraint in Lemma~\ref{lem:np-rs} cannot, in general, be met by a
binary threshold rule.
The optimal solution may therefore require fractional mass at a
likelihood-ratio tie (i.e., $\mathbb P_{Z\sim p(\cdot\mid x)}(\gamma(Z)=t)>0$ for some $t$), leading to the convex program
\begin{align}
    \min_{0\le h(z)\le 1}\quad
     & \sum_{z} h(z)\,p(z\mid x_{\adv}) \notag \\
    \text{s.t.}\quad
     & \sum_{z} h(z)\,p(z\mid x)=p_A ,
    \label{eq:knapsack}
\end{align}
which is the likelihood-ratio ordering problem underlying tight
randomized-smoothing certificates for discrete spaces~\citep{lee2019tight}.
We refer to this equivalent optimization as a \emph{fractional knapsack}
following the terminology of \citet{chen2025worstcaserobustnesslargelanguage}.
Its solution yields the tightest computable lower bound on $p_{\adv}(\delta)$ obtainable from the
Neyman--Pearson relaxation.
Under exact kernel symmetry, this knapsack reduces to a grouped form, yielding the
tightest bound induced by the relaxation.

The hybrid results below are self-contained: Theorem~\ref{thm:main} uses only
the Neyman--Pearson lemma, the factorized discrete--Gaussian likelihood ratio,
and elementary Gaussian monotonicity.

\textbf{Limitations of Existing Theory.}
Up to Eq.~\eqref{eq:knapsack}, the formulation is standard.
However, existing randomized smoothing frameworks treat continuous and discrete
inputs separately.
Gaussian smoothing applies only to continuous spaces, while discrete smoothing
induces a non-invertible Neyman--Pearson map that cannot be coupled with
continuous noise.
As a consequence, no existing theory provides a principled joint
likelihood ordering for mixed discrete--continuous perturbations.
Addressing this gap is the focus of the remainder of this paper.

\section{Hybrid Randomized Smoothing}
\label{sec:hybrid_rs}

We now extend randomized smoothing to hybrid inputs combining discrete and continuous components.
Let \( x = (x_1, x_2) \), where \( x_1 \in \mathcal X_1 \) is discrete and \( x_2 \in \mathbb R^D \) is continuous.
We assume independent smoothing kernels:
\[
    Z_1 \sim p_1(\cdot \mid x_1), \qquad Z_2 \sim \mathcal{N}(x_2, \sigma^2 I),
\]
with joint distribution \( p(z \mid x) = p_1(z_1 \mid x_1)\, p_2(z_2 \mid x_2) \), and define the smoothed score \( g(x) = \mathbb{E}[f(Z_1, Z_2)] \).

To quantify certified robustness under mixed perturbations, we consider hybrid budgets \( (d, \epsilon) \) for the discrete and continuous components, respectively.
The ideal robustness certificate is the maximal continuous perturbation radius that preserves the decision under a discrete budget $d$:
\begin{equation*}
    \label{eq:hybrid-certified-radius}
    \resizebox{\linewidth}{!}{$
            r_{\mathrm{cert}}(x; d)
            = \sup \left\{ \epsilon \ge 0 :
            \forall x_{\mathrm{adv}},\
            \begin{aligned}
                 & D_1(x_1, x_{1,\mathrm{adv}}) \le d,          \\
                 & \|x_2 - x_{2,\mathrm{adv}}\|_2 \le \epsilon, \\
                 & f(x_{\mathrm{adv}}) = f(x)
            \end{aligned}
            \right\}.
        $}
\end{equation*}
We define the hybrid RS-certified bound on radius:
\begin{align*}
    p_{\mathrm{adv}}(d, \epsilon)
     & = \inf_{\substack{D_1(x_1, x_{1,\mathrm{adv}}) \le d                               \\
    \|x_2 - x_{2,\mathrm{adv}}\|_2 \le \epsilon}} g(x_{\mathrm{adv}}),                    \\
    r_{\mathrm{RS}}(x; d, \tau)
     & = \sup \left\{ \epsilon \ge 0 :\ p_{\mathrm{adv}}(d, \epsilon) > \tau \right\} \ .
\end{align*}
We define the marginal likelihood ratios:
\[
    \gamma_1(z_1) = \frac{p_1(z_1 \mid x_{1,\mathrm{adv}})}{p_1(z_1 \mid x_1)}, \qquad
    \gamma_2(z_2) = \frac{p_2(z_2 \mid x_{2,\mathrm{adv}})}{p_2(z_2 \mid x_2)} \ .
\]

\textbf{Why Naive Multimodal Composition Fails.}
A natural baseline is to certify the discrete and continuous components
independently and combine the resulting guarantees.
This is invalid in general.
Randomized smoothing robustness is governed by a \emph{single}
NP inner problem over $h(z_1,z_2)$ under a single expectation
constraint, which does not decompose across modalities.
Even when the smoothing kernel factorizes, the NP-optimal acceptance region on the product space need not be separable.

\begin{proposition}[Non-composability of NP rules]
    \label{prop:non_composable_np}
    Let $p(z\mid x)=p_1(z_1\mid x_1)\,p_2(z_2\mid x_2)$.
    In general, the NP-optimal acceptance region on
    $\mathcal X_1\times\mathcal \mathbb R^D$ cannot be written as
    $\{\gamma_1(z_1)\le t_1\}\cap\{\gamma_2(z_2)\le t_2\}$ for any thresholds
    $t_1,t_2$.
\end{proposition}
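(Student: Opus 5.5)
The plan is to exhibit a concrete counterexample, since the statement is existential ("in general"). By Lemma~\ref{lem:np-rs} applied to the product kernel, the NP-optimal region is a sublevel set of the \emph{joint} likelihood ratio. Factorization of the kernel gives $\gamma(z_1,z_2)=\gamma_1(z_1)\gamma_2(z_2)$, so the optimal region is
\[
\{(z_1,z_2):\ \gamma_1(z_1)\,\gamma_2(z_2)\le t^\star\},
\]
up to randomization on ties. In the Gaussian coordinate, $\gamma_2(z_2)=\exp\bigl(\langle z_2-x_2,\,v\rangle/\sigma^2-\|v\|^2/(2\sigma^2)\bigr)$ with $v=x_{2,\adv}-x_2$. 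Thus $\gamma_2$ is a strictly increasing bijection of the scalar projection $s=\langle z_2-x_2,v\rangle/\|v\|$, and $s$ has a continuous $\mathcal N(0,\sigma^2)$ law under $p_2(\cdot\mid x_2)$.

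The key structural observation is the following. For a product region $\{\gamma_1\le t_1\}\cap\{\gamma_2\le t_2\}$, every discrete slice $z_1$ either has its continuous section equal to the fixed half-space $\{s\le s(t_2)\}$ or has an empty section. For the NP region, the section at $z_1$ is instead $\{s\le s(t^\star/\gamma_1(z_1))\}$, whose threshold moves strictly with $\gamma_1(z_1)$.

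I would therefore take the simplest discrete channel with two atoms $a,b$ in the support and $\gamma_1(a)\neq\gamma_1(b)$. For instance, take $\mathcal X_1=\{0,1\}$ with $p_1(z_1\mid x_1)$ a binary flip kernel of flip probability $\beta\in(0,1/2)$, and $x_{1,\adv}\neq x_1$. Then $\gamma_1(x_1)=\beta/(1-\beta)<1<\gamma_1(1-x_1)$. Fix any $v\neq 0$ and any $p_A\in(0,1)$. Because $s$ has full support and a continuous distribution, $t^\star$ exists with no tie randomization needed. The two NP sections are then half-lines $\{s\le c_a\}$ and $\{s\le c_b\}$ with $c_a\neq c_b$ both finite, since $\gamma_1(a)\neq\gamma_1(b)$ and $t^\star\in(0,\infty)$. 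Both sections are nonempty with nonzero measure and distinct.

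In contrast, any set $\{\gamma_1\le t_1\}\cap\{\gamma_2\le t_2\}$ has sections that are either identical or empty. The two cannot agree, even up to $p(\cdot\mid x)$-null sets, because the sections $\{s\le c_a\}$ and $\{s\le c_b\}$ differ on the interval between $c_a$ and $c_b$, which has positive Gaussian mass.

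The main obstacle is making "the NP-optimal region" well defined so that the non-equality is meaningful. Minimizers are only unique up to null sets and ties, so one might worry that some product set is also optimal. I would handle this via the strict inequality in the NP lemma. Any feasible $h$ that differs from $\mathbf 1_{\{\gamma\le t^\star\}}$ on a set of positive mass, off the tie set $\{\gamma=t^\star\}$, has strictly larger adversarial expectation. Here the tie set is null because $\gamma_2$ is continuously distributed on each slice. Hence the product region would be strictly suboptimal, and this also yields the stated "for any thresholds $t_1,t_2$".
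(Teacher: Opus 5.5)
Your proposal is correct and follows the paper's own argument. Both use a discrete channel with two atoms of distinct likelihood ratio, combined with a shifted Gaussian, so that the NP region is a half-space in $\log\gamma_1(z_1)+r z_2$ whose continuous sections move with $z_1$; no set $\{\gamma_1\le t_1\}\cap\{\gamma_2\le t_2\}$ can reproduce this, because its sections are either identical or empty. Your slice argument and the strict NP inequality make ``cannot be written as'' precise up to null sets, which the paper's sketch leaves implicit, and your binary-flip kernel is a realizable instance, whereas the paper's choice $\gamma_1(a)=1$, $\gamma_1(b)=M>1$ on a two-point space cannot occur because $\sum_{z_1}p_1(z_1\mid x_1)\gamma_1(z_1)=1$ forces some ratio below one.
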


\textbf{Counterexample (Discrete $\times$ Continuous).}
Let $z_1\in\{a,b\}$ with likelihood ratios $\gamma_1(a)=1$ and
$\gamma_1(b)=M\gg1$, and let $z_2$ follow a Gaussian channel with shift $r>0$.
The joint likelihood ratio satisfies
\(
\log \gamma(z_1,z_2)=\log\gamma_1(z_1)+r z_2-\tfrac{r^2}{2}.
\)
The NP-optimal acceptance region is a half-space in the joint variable
$\log\gamma_1(z_1)+r z_2$, intrinsically coupling the discrete and continuous
coordinates.
No choice of separate unimodal thresholds can recover this region; equivalently,
perturbations may satisfy each unimodal certified radius while violating the
joint NP constraint, so unimodal certificates do not compose.

\subsection{Hybrid Neyman--Pearson Certificate}
\label{subsec:main_result}
The following theorem shows that the hybrid NP inner problem admits
a closed-form, one-dimensional characterization even under heterogeneous
discrete--continuous perturbations.
The central object is a \emph{likelihood-ratio CDF} $F(t;r)$ that quantifies how much probability
mass can be accepted under the NP constraint at a given continuous
radius.
Crucially, this map is continuous and strictly increasing, so the optimal hybrid
test is governed by a \emph{single} scalar threshold.

\begin{theorem}[Hybrid randomized smoothing certificate]
    \label{thm:main}
    Fix $x=(x_1,x_2)$ with clean smoothed value $p_A=g(x)\in(0,1)$.
    Let $x_{\adv}=(x_{1,\adv},x_{2,\adv})$ satisfy
    $D_1(x_1,x_{1,\adv})\le d$ and $\|x_{2,\adv}-x_2\|_2\le \epsilon$, and set
    $r=\|x_{2,\adv}-x_2\|_2$.
    Define
    \[
        \resizebox{\linewidth}{!}{$
                \begin{aligned}
                    F(t; r)
                     & = \sum_{z_1} p_1\!\left(z_1 \mid x_1\right)\,
                    \Phi\!\left(
                    \frac{
                            \frac{r^2}{2}
                            + \sigma^2\!\bigl(\log t - \log \gamma_1(z_1)\bigr)
                        }{\sigma r}
                    \right),
                \end{aligned}
            $}
    \]
    where $\Phi$ is the Gaussian cdf.
    For each $r>0$, there exists a unique $t^\star(r)>0$ such that $F(t^\star(r);r)=p_A$.

    The Neyman--Pearson optimal rule for the hybrid  problem is the
    likelihood-ratio test
    \[
        h^\star(z_1,z_2)=\mathbf 1\{\gamma(z_1,z_2)\le t^\star(r)\}.
    \]
    The resulting worst-case smoothed probability at radius $r$ is
    \[
        \resizebox{\linewidth}{!}{$
                V(x_{1,\adv};r)=
                \sum_{z_1} p_1(z_1\mid x_{1,\adv})\,
                \Phi\!\left(
                \frac{\tfrac{r^2}{2}+\sigma^2\bigl(\log t^\star(r)-\log\gamma_1(z_1)\bigr)}
                    {\sigma r}
                -\frac{r}{\sigma}
                \right).
            $}
    \]

    Moreover, for any fixed $x_{1,\adv}$, $V(x_{1,\adv};r)$ is nonincreasing in $r$, so the continuous worst case is attained at $r=\epsilon$.
    Taking the worst case over all discrete adversaries with $D_1(x_1,x_{1,\adv})\le d$,
    the hybrid worst-case smoothed value under budgets $(d,\epsilon)$ is
    \[
        p_{\adv}(d,\epsilon)
        =\inf_{\substack{D_1(x_1,x_{1,\adv})\le d}}
        V(x_{1,\adv};\epsilon).
    \]
\end{theorem}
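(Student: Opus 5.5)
The plan is to reduce everything to one-dimensional Gaussian computations. By rotational invariance of $\mathcal N(x_2,\sigma^2 I)$, I first rotate coordinates so that $x_{2,\adv}-x_2=r e_1$. The continuous likelihood ratio then depends only on $u=(z_2-x_2)_1/\sigma$:
\[
\log\gamma_2(z_2)=\frac{r}{\sigma}u-\frac{r^2}{2\sigma^2},
\]
where $u\sim\mathcal N(0,1)$ under the clean channel and $u\sim\mathcal N(r/\sigma,1)$ under the adversarial one. The joint ratio factorizes as $\gamma(z_1,z_2)=\gamma_1(z_1)\gamma_2(z_2)$. Hence, for fixed $z_1$, the event $\{\gamma\le t\}$ is
\[
\Bigl\{u\le \tfrac{r}{2\sigma}+\tfrac{\sigma}{r}\bigl(\log t-\log\gamma_1(z_1)\bigr)\Bigr\},
\]
and this threshold equals the argument of $\Phi$ in $F$. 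Integrating against $p_1(z_1\mid x_1)$ gives $\mathbb P_{p(\cdot\mid x)}(\gamma\le t)=F(t;r)$. Under $p(\cdot\mid x_\adv)$ the same event has probability $\Phi(\cdot-r/\sigma)$, weighted by $p_1(z_1\mid x_{1,\adv})$, which is exactly the formula for $V$. I will restrict the sum to $z_1$ with $p_1(z_1\mid x_1)>0$ and treat $\gamma_1=\infty$ atoms separately; they contribute $0$ to $F$ and $\Phi(-\infty)=0$ to $V$.

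Next I establish existence and uniqueness of $t^\star(r)$. Each summand of $F$ is continuous and strictly increasing in $\log t$, because $r>0$ and $\Phi$ is strictly increasing. Dominated convergence gives $F\to0$ as $t\to0$ and $F\to1$ as $t\to\infty$. The intermediate value theorem then yields a unique $t^\star(r)$ with $F(t^\star;r)=p_A\in(0,1)$. This step is where the continuity supplied by the Gaussian factor matters. Because $\gamma$ has a continuous distribution, $\mathbb P(\gamma=t)=0$, so no fractional tie-breaking is needed. Lemma~\ref{lem:np-rs} then applies directly: $h^\star=\mathbf 1\{\gamma\le t^\star(r)\}$ is optimal, and its adversarial expectation is $V(x_{1,\adv};r)$. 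Consequently, $g(x_\adv)\ge V$ for every $f$ with $g(x)=p_A$.

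The main obstacle is showing that $V$ is nonincreasing in $r$. I would avoid differentiating through the implicit $t^\star(r)$ and use a nesting/data-processing argument instead. Fix $x_{1,\adv}$ and radii $r<r'$. The Gaussian shift family has monotone likelihood ratio in $u$. Writing the radius-$r'$ adversarial channel as a composition $x_2\to x_2+re_1\to x_2+r'e_1$, it suffices to show that the NP value $\beta(r)$ is a minimum over a set of tests that only grows with $r$. I would try the following concrete route. Express $V$ as the optimal type-II value of the test between $P_0=p_1(\cdot\mid x_1)\otimes\mathcal N(0,1)$ and $P_r=p_1(\cdot\mid x_{1,\adv})\otimes\mathcal N(r/\sigma,1)$ at level $p_A$. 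Then show that $P_{r'}$ is obtained from $P_r$ by adding an independent Gaussian shift post-processing. Any test $h$ for the pair $(P_0,P_{r'})$ induces, by averaging, a test for $(P_0,P_r)$, and a Blackwell-type comparison shows $\beta(r')\le\beta(r)$.

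If this comparison argument is awkward, the fallback is direct differentiation. I would compute $\partial_r V$ using implicit differentiation of $F(t^\star(r);r)=p_A$. The goal is to show that the resulting expression is a covariance-type quantity with a sign, using the identity $\phi(a-r/\sigma)=\phi(a)\,t/\gamma_1$ on the boundary.

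Finally, the certificate is assembled. Since $V(x_{1,\adv};\cdot)$ is nonincreasing, $\inf_{r\le\epsilon}V=V(x_{1,\adv};\epsilon)$. Taking the infimum over discrete adversaries with $D_1\le d$ then gives the stated expression for $p_\adv(d,\epsilon)$, interpreted as the NP lower bound over all $h$ consistent with $p_A$.
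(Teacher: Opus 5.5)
Your computation of $F$ and $V$, the existence and uniqueness of $t^\star(r)$, and the no-ties argument for NP optimality coincide with the paper's Steps 1--3.

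The gap is in the monotonicity step, which you rightly flag as the crux: your Blackwell argument fails as stated, for two reasons. First, post-processing by the shift $x_2+re_1\mapsto x_2+r'e_1$ also moves the clean law, sending $P_0$ to $p_1(\cdot\mid x_1)\otimes\mathcal N((r'-r)/\sigma,1)\neq P_0$. Averaging a test over this map therefore breaks the constraint $\mathbb E_{P_0}[h]=p_A$. Second, transferring tests from $(P_0,P_{r'})$ to $(P_0,P_r)$, as you describe, would prove $\beta(r)\le\beta(r')$, the reverse inequality.

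What you need is a kernel that fixes $P_0$ and maps $P_{r'}$ to $P_r$. Leave $z_1$ alone and send $u\mapsto\rho u+\sqrt{1-\rho^2}\,\xi$, with $\rho=r/r'$ and $\xi\sim\mathcal N(0,1)$ independent. Let $h_r$ be optimal at radius $r$, and set $h'(z_1,u):=\mathbb E_\xi\bigl[h_r(z_1,\rho u+\sqrt{1-\rho^2}\,\xi)\bigr]$. Then $h'$ is $[0,1]$-valued, has clean mean $p_A$, and has mean $V(x_{1,\adv};r)$ under $P_{r'}$, so $V(x_{1,\adv};r')\le V(x_{1,\adv};r)$. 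Taking $\rho=0$ also covers $r=0$, which lies in the $\epsilon$-ball and must be compared against too.

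Your fallback actually closes immediately. Let $a_{z_1}$ be the argument of $\Phi$ in $F$. Your identity gives $p_1(z_1\mid x_{1,\adv})\,\phi(a_{z_1}-r/\sigma)=t\,p_1(z_1\mid x_1)\,\phi(a_{z_1})$. Hence all chain-rule terms in $\frac{d}{dr}V$ add up to $t^\star(r)\cdot\frac{d}{dr}F(t^\star(r);r)=0$, leaving
\[
\frac{d}{dr}V(x_{1,\adv};r)=-\frac{t^\star(r)}{\sigma}\sum_{z_1}p_1(z_1\mid x_1)\,\phi(a_{z_1})<0 .
\]
No covariance argument is needed. You do need to justify the implicit function theorem, via $\partial_tF>0$ and dominated convergence (using that $\phi(a)\lvert a\rvert$ is bounded).

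The paper takes a different route. It factors $\Gamma_r=\gamma_1(Z_1)Y_r$, with $Y_r$ the exponential martingale of Brownian motion. It gets $\Gamma_{r_1}\le_{\mathrm{cx}}\Gamma_{r_2}$ by conditional Jensen, then converts convex order into Lorenz order using $V=\int_0^{p_A}Q_{\Gamma_r}(u)\,du$ and Shaked--Shanthikumar. The corrected garbling is the experiment-level form of the same fact. However, it avoids the quantile and continuity lemmas and the external theorem, because $V$ is an infimum over tests and you only have to exhibit a feasible one. The derivative route is the most elementary, but it holds only on $(0,\infty)$ and needs a separate limit $r\downarrow0$.

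One caveat you share with the paper concerns zero discrete ratios. Let $m_0:=\sum_{z_1:\gamma_1(z_1)=0}p_1(z_1\mid x_1)$. If $m_0>0$, then $F(t;r)\to m_0$, not $0$, as $t\downarrow0$. So $t^\star(r)>0$ exists only when $p_A>m_0$; otherwise the NP value is $0$.
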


See proof in Appendix~\ref{app:proof-main}. Theorem~\ref{thm:main} provides a closed-form characterization of the worst-case
smoothed value under joint discrete--continuous perturbations.

The hybrid threshold $t^\star(r)$ captures the exact NP trade-off at radius $r$,
while $V(x_{1,\adv};r)$ evaluates the corresponding worst-case smoothed probability.
Minimizing $V$ over admissible discrete perturbations yields $p_{\adv}(d,\epsilon)$.

\textbf{Monotonicity in the Discrete Budget.}
For fixed continuous radius $r$, let $\widehat{p}_{\adv}(d,r)$ denote the
certified worst-case value at discrete budget $d$.
Since adversary sets are nested, $\mathcal A(d^\prime)\subseteq \mathcal A(d)$ for
$d^\prime \le d$, we have $\widehat{p}_{\adv}(d^\prime,r)\ge \widehat{p}_{\adv}(d,r)$.
Consequently, for any  $\tau$, certified radii and certified accuracies
are pointwise nonincreasing as $d$ increases.

\textbf{Continuous Smoothing Regularizes the Discrete Knapsack.}
When the input contains only a discrete component, the NP acceptance
map is stepwise in the likelihood-ratio threshold and the inner problem reduces
to a fractional knapsack with a non-invertible capacity constraint.
In the hybrid setting, the Gaussian component introduces a strictly monotone
factor in $\gamma(z_1,z_2)$, and the resulting likelihood-ratio CDF $F(t;r)$ becomes
continuous and strictly increasing in $t$.
Thus the continuous modality smooths the discrete likelihood ratios and
regularizes the knapsack structure into an analytically invertible
one-dimensional problem. \\
Here, $\sigma$ plays a dual role: beyond controlling the continuous robustness
radius, it regularizes the discrete likelihood ratios by breaking atomic ties,
inducing a smooth and invertible relaxation of the discrete NP problem.

\textbf{Recovery of Unimodal Limits: Consistency with Continuous- and Discrete-Only Certificates.}
The hybrid certificate recovers the classical unimodal guarantees as special cases.
If $x_{1,\adv}=x_1$, then $\gamma_1\equiv 1$ and the capacity equation reduces to a
one-dimensional Gaussian inversion, exactly recovering the standard $\ell_2$
randomized smoothing certificate.
If there is no continuous perturbation, the NP problem reduces to a
purely discrete likelihood-ratio test, yielding the classical fractional-knapsack
worst-case bound over $\gamma_1$.
A formal derivation of this discrete limit, obtained as $\sigma\to\infty$, is given
in Appendix~\ref{subsec:recovery_knapsack_limit}.
Thus, the hybrid formulation strictly generalizes both unimodal settings without
loss of tightness.

\subsection{Implementation of the Hybrid Certificate}
\label{subsec:implementation}
We describe how to compute a certified lower bound $p_{\adv}(d,\epsilon)$ for a
fixed input $x=(x_1,x_2)$ and budgets $(d,\epsilon)$ using the quantities $F$ and $V$
from Theorem~\ref{thm:main}.
The procedure reduces the hybrid certificate to a one-dimensional
NP root-finding problem for each admissible discrete perturbation.

\begin{enumerate}
    \item \textbf{Estimate the clean smoothed score.}
          Draw i.i.d.\ samples $(Z_1^{(n)}, Z_2^{(n)}) \sim p(\cdot \mid x)$ and estimate
          \[
          p_A \approx \frac{1}{N}\sum_{n=1}^N f\bigl(Z_1^{(n)}, Z_2^{(n)}\bigr).
          \]
          Using a one-sided Clopper--Pearson interval at risk level $\alpha$, we obtain a
          lower confidence bound $\hat p_A$ on the clean smoothed score.

    \item \textbf{Discrete worst-case aggregation.}
          Compute the worst-case over all admissible adversarial texts
          $x_{1,\adv}$ satisfying $D_1(x_1,x_{1,\adv})\le d$.
          For small budgets, this can be done by exact enumeration.
          For common symmetric text-smoothing kernels (e.g., uniform or absorbing),
          the induced distribution $p_1(\cdot\mid x_{1,\adv})$ depends only on the budget
          $d$ and not on the specific adversarial suffix, allowing the discrete worst
          case to be characterized analytically without explicit enumeration; formal definitions are given  in Appendix~\ref{ssec:kernel_choice}.

    \item \textbf{Solve the hybrid Neyman--Pearson constraint.}
          For each candidate $k$, compute the discrete likelihood ratios
          $\gamma_1^{(k)}(z_1)=p_1(z_1\mid x_{1,\adv}^{(k)})/p_1(z_1\mid x_1)$ and set
          $r=\epsilon$.
          Solve the one-dimensional equation
          \(
          F_k(t)=\hat p_A
          \)
          for $t_k^\star>0$ (e.g.\ by bisection in $u=\log t$), where $F_k$ is defined in Theorem~\ref{thm:main}.
          Existence and uniqueness follow from monotonicity of $F_k(\cdot)$.

    \item \textbf{Evaluate the hybrid adversarial value.}
          Compute the corresponding adversarial score
          \(
          V_k = V\bigl(x_{1,\adv}^{(k)}; r\bigr),
          \)
          using the closed-form expression from Theorem~\ref{thm:main}.

    \item \textbf{Aggregate over discrete attacks.}
          Take the worst case
          \(
          \hat p_{\adv}(d,\epsilon)=\min_k V_k.
          \)

    \item \textbf{Certification decision.}
          In the binary case with threshold $\tau$, the prediction at
          $x$ is certified within budgets $(d,\epsilon)$ if the worst-case adversarial value
          $\hat p_{\adv}(d,\epsilon)$ remains on the same side of $\tau$ as $\hat p_A$.

\end{enumerate}

\textbf{Structural Symmetry of Text Smoothing Kernels.}
We consider standard discrete smoothing kernels, including uniform and absorbing/masking, widely used in prior work \citep{jin2020bert,he2022mae,meng2022concrete,lou2023discrete,chen2025worstcaserobustnesslargelanguage}. Each suffix token is independently corrupted with probability $\beta$, which controls the discrete robustness budget. For \emph{uniform} replacement, corrupted tokens are drawn uniformly from the vocabulary; for \emph{absorbing} kernels, they are replaced with a fixed mask token. In both cases, the smoothed distribution $p_1(\cdot \mid x_{1,\adv})$ depends only on the edit budget (suffix length or $\ell_0$ sparsity) and not on token identity. This structural symmetry enables efficient worst-case evaluation across all discrete threat models via a canonical adversarial input, avoiding combinatorial search.

\textbf{Degeneracy of Absorbing Kernels for Suffix Attacks.}
However, for suffix attacks, absorbing kernels induce degenerate Neyman--Pearson bounds: the likelihood ratio collapses to a two-point distribution, and certification requires $p_A \ge 1 - \beta^d$, with bound at most $\beta^d$, degrading exponentially with $d$. Uniform kernels, by contrast, maintain overlapping support between clean and adversarial distributions, yielding stronger certificates. We adopt uniform replacement for suffix attacks. Kernel definitions and behaviors are detailed in Appendix~\ref{ssec:kernel_choice}.

All numerical steps in the implementation preserve conservative guarantees: the clean smoothed score is lower-bounded via a one-sided Clopper--Pearson interval, the NP threshold $t^\star$ is obtained by monotone root finding, and the outer discrete minimization is handled either exactly by enumeration or conservatively using kernel symmetries, ensuring that the resulting certificate never overestimates robustness.
The resulting certificate is inexpensive to compute on CPU relative to model
inference and Monte Carlo sampling.
Appendix~\ref{subsec:numerical_precision} formalizes numerical precision choices
required to preserve conservative guarantees.

\section{Experiments}

We evaluate hybrid randomized smoothing on safety filtering tasks with joint text–image perturbations. Robustness is measured against $\ell_2$ image noise and token-level text attacks. We compare hybrid to unimodal certificates on large pre-trained model, using the uniform kernel with $\alpha=0.01$, $n=10^4$, and $\beta=0.25$ unless stated otherwise.

\subsection{Hybrid Certification on Tabular Data}
\label{sec:exp_tabular}
This experiment is intended as a
sanity check that the hybrid certificate works whenever discrete and continuous features coexist. We evaluate on the \textsc{Adult} census-income dataset
\citep{becker1996adult,kohavi1996scaling}, which combines continuous and categorical
features, using a linear SVM trained on standardized continuous variables and
one-hot encoded categorical variables. The adversary may apply joint
perturbations
\(
\|\delta_{\mathrm{num}}\|_2 \le \epsilon,
\qquad
\|\delta_{\mathrm{cat}}\|_0 \le d,
\)
where $\|\cdot\|_0$ counts the number of modified categorical fields.
For each discrete budget $d$, we report certified accuracy $\mathrm{CA}$. Figure~\ref{fig:hybrid_rs_certified_accuracy} shows a monotone trade-off: certified accuracy degrades as $d$ increases, with no certification beyond $\epsilon=0.75$ for $d=2$, while for $d=0$ certification holds up to $\epsilon=1.9$ at over $30\%$ $\mathrm{CA}$, confirming that the hybrid certificate captures a joint robustness trade-off.

\begin{figure}[h]
    \centering
    \includegraphics[width=0.5\linewidth]{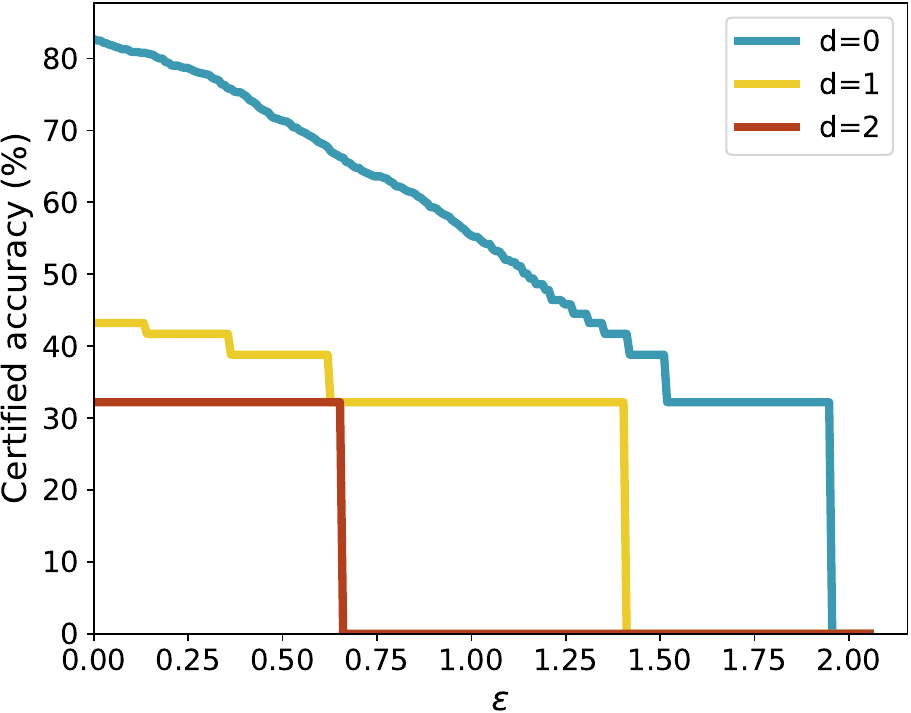}
    \caption{
        Certified accuracy (\%) as a function of the continuous $\ell_2$ radius for
        hybrid randomized smoothing on tabular data, for discrete budgets
        $d\in\{0,1,2\}$.
    }
    \label{fig:hybrid_rs_certified_accuracy}
\end{figure}

\subsection{Certified Multimodal Safety Filtering}
\label{sec:exp_vlm}

We evaluate certified robustness for multimodal safety filtering under joint
text--image perturbations using LLaVA-Guard~\citep{helff2024llavaguard}, a
policy-conditioned safety classifier. The model takes as input a triplet $(x_1, x_2, \pi)$,
where $x_1$ is a text prompt, $x_2$ is an image, and $\pi$ is a natural language safety policy fixed and not treated as an adversarial or stochastic variable
(see Appendix~\ref{app:policy_prompts}).
For all evaluations, we fix $\pi$ to a predefined standard policy describing prohibited content categories.
For the image modality, we apply a denoiser-based randomized smoothing scheme,
following the denoising-based RS paradigm of \citet{carlini2023certifiedfree}.

\begin{figure}[ht]
    \centering
    \includegraphics[width=\linewidth]{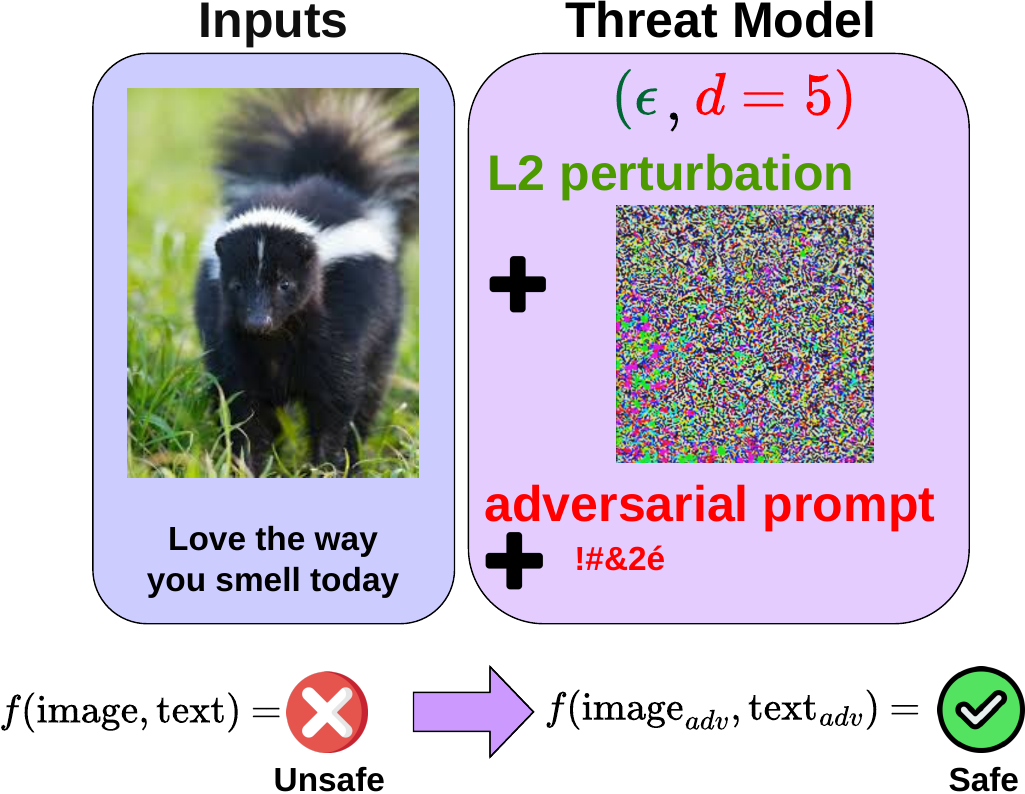}
    \caption{
        Multimodal threat model for safety filtering.
        The image alone and the text alone are classified as "Safe", while their
        combination is "Unsafe".
        An adversary applies an $\ell_2$-bounded perturbation to the image and an
        append-only suffix attack to the text, with the goal of inducing a
        "Safe" joint decision.
    }
    \label{fig:threat_model_multimodal_safety}
\end{figure}

\textbf{Threat Model.}
We consider a joint multimodal adversary that perturbs both the text and image inputs.

\emph{Text:} The primary threat is prompt-injection via suffix attacks: starting from a benign prompt $x_1$, the adversary appends up to $d$ tokens to produce $x_{1,\adv}$ with $D_1(x_1,x_{1,\adv}) \le d$. This covers standard prompt-injection attacks~\citep{zou2023universal,kumar2024certifying,chen2025worstcaserobustnesslargelanguage}. More generally, we handle arbitrary token insertions or replacements, corresponding to $\ell_0$-bounded perturbations, within the same certification framework. Appendix~\ref{subsec:variable_length_text} discusses variable-length inputs.

\emph{Image:} The adversary may apply an $\ell_2$-bounded perturbation $\|x_{2,\adv} - x_2\|_2 \le \epsilon$.

\textbf{Interaction-Only Evaluation.}
We focus on an \emph{interaction-only} setting where unsafe behavior emerges only from multimodal composition:
\[
    f(x_1,\varnothing)=0,\quad
    f(\varnothing,x_2)=0,\quad
    f(x_1,x_2)=1.
\]
Neither modality alone is sufficient to trigger an "Unsafe" label. Thus, unimodal certification is unsound in this setting (Figure~\ref{fig:emergent_multimodal_unsafety}).

\textbf{Dataset Construction.}
We construct interaction-only examples from the Hateful Memes dataset~\citep{kiela2020hateful}. Text-only "Unsafe" prompts are removed using LLaVA-Guard. Remaining meme images have embedded text removed using FLUX.1-Kontext~\citep{labs2025flux1kontextflowmatching}, and are verified as "Safe" when presented alone. We keep only samples classified as "Unsafe" under the original text-image pair, yielding 400 interaction-only examples. We verify that Gaussian noise alone does not induce "Unsafe" predictions, ensuring that joint behavior drives the unsafe classification.

\textbf{Defense to the Multimodal Safety Classifier.}
We apply hybrid RS to certify robustness under this threat model. For text, we use a discrete smoothing kernel $p_1(\cdot \mid x_1)$ that corrupts tokens in the suffix region using a fixed noise model (uniform or absorbing). For images, we use Gaussian smoothing with variance $\sigma^2$. The smoothed classifier predicts "Unsafe" when the estimated unsafe probability under the joint distribution exceeds a threshold $\tau$.
We fix $\tau=4.6\times 10^{-5}$ following~\citet{chen2025worstcaserobustnesslargelanguage}. Since certification is monotone in $\tau$, results are qualitatively stable.

Smoothing reduces clean classifier utility relative to the unsmoothed base
model. We report the resulting smoothed accuracy under text and image
smoothing in Appendix~\ref{app:smoothed_accuracy_degradation}
(Table~\ref{tab:smoothed_accuracy_degradation}).
Smoothed accuracy degrades mainly through text corruption: under full-text
$\ell_0$ smoothing, uniform replacement may alter semantic content throughout
the prompt, while under the suffix threat model used for certification, only
appended tokens are corrupted, so the core prompt is preserved.

\textbf{Comparison to Unimodal Baselines.}
We restrict evaluation to examples where at least one method certifies a non-trivial robustness radius (text certification and image--text certification), we get a subset of $100$ examples. Table~\ref{tab:hybrid_comparison} compares certified image and text robustness across baselines.
Hybrid RS achieves image radius close to image-only smoothing, with modest reduction from joint certification, and comparable text robustness to text-only RS.

For the suffix attack, hybrid randomized smoothing certifies a joint guarantee
$(r_{\mathrm{hybrid}}, d_{\mathrm{hybrid}})$, whereas image-only and text-only
methods yield $(r_{\mathrm{img}},0)$ and $(0,d_{\mathrm{txt}})$.
For image robustness, we report $r_{\mathrm{hybrid}}$ at $d=1$, the smallest
non-trivial text budget, and compare it to $r_{\mathrm{img}}$.
Text robustness is compared via $d_{\mathrm{hybrid}}$ and $d_{\mathrm{txt}}$.
Average certified values are reported in Table~\ref{tab:hybrid_comparison}.
\begin{table}[t]
    \centering
    \small
    \begin{tabular}{lcc}
        \toprule
        Method                    & Image radius $\bar r$ & Text budget $\bar d$ \\
        \midrule
        Image-only RS             & $3.99$                & $0$                  \\
        Text-only RS              & $0$                   & $3.26$               \\
        \midrule
        \textbf{Hybrid RS (ours)} & $3.76$                & $3.07$               \\
        \bottomrule
    \end{tabular}
    \caption{Average certified robustness for suffix attack. For Hybrid RS, the image radius is reported
        at text budget $d=1$.}
    \label{tab:hybrid_comparison}
\end{table}
The hybrid certificate yields an image radius of the same order of magnitude as
image-only smoothing, with a modest reduction due to joint certification, while
achieving text robustness comparable to text-only smoothing under a different
smoothed decision rule.

We also test for $\ell_0$-text attacks; we report the mean certified $\ell_0$ radius against prompt-injection attacks. The resulting averages are $\bar d_{\mathrm{txt}}=1.020$ and $\bar d_{\mathrm{hybrid}}=0.33$, indicating that hybrid RS yields notably smaller certified text budgets in this regime.
These values reflect an intrinsic property of unsafe prompts: modifying only one or two tokens is often sufficient to switch the model’s decision between safe and unsafe.
A finer-grained analysis of certified accuracy as a function of the image perturbation radius~$\epsilon$ is provided in Appendix~\ref{exp:l0_attack_image_radius}.

\textbf{Certified Accuracy Under Joint Perturbations.}
On interaction-only multimodal examples, we apply the proposed hybrid randomized
smoothing certificate and report certified accuracy as a function of the joint
text and image budgets $(d,\epsilon)$.
\begin{figure}[h]
    \centering
    \includegraphics[width=1\linewidth]{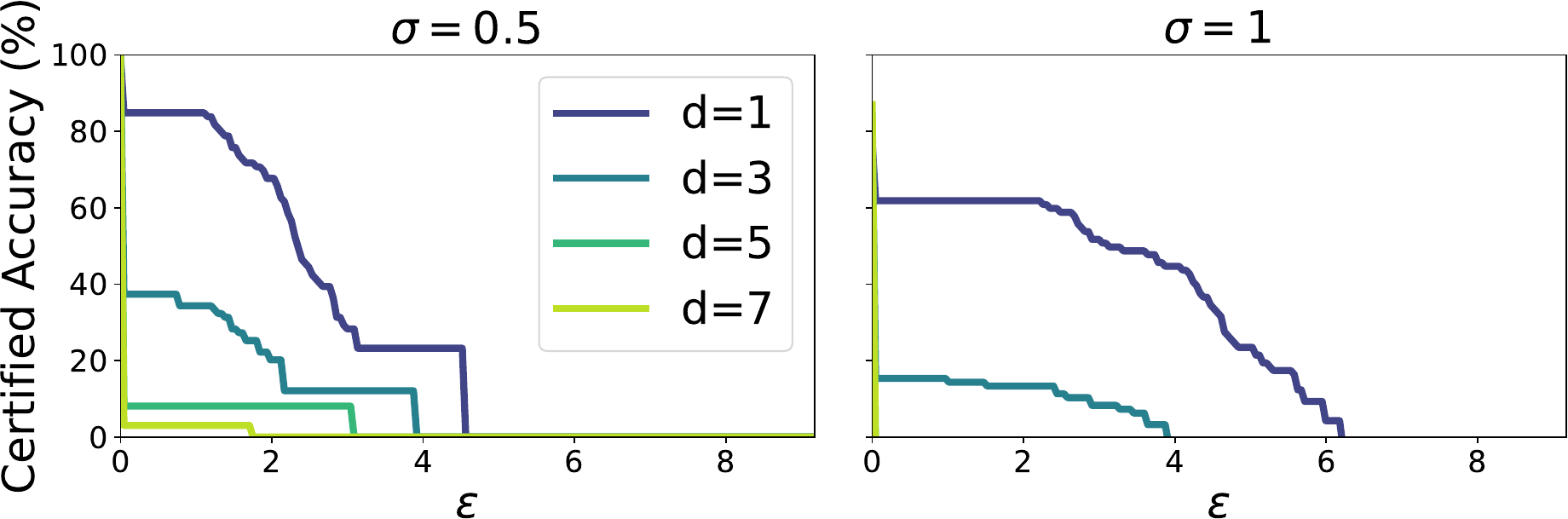}
    \caption{
        Certified accuracy as a function of the image perturbation radius $\epsilon$
        ($\ell_2$ threat on images), combined with adversarial text suffix attacks.
        Curves correspond to different discrete text budgets $d$.
        Gaussian image smoothing uses $\sigma=0.5$ (left) and $\sigma=1.0$ (right).
    }
    \label{fig:certified_multimodal_suffix}
\end{figure}
Figure~\ref{fig:certified_multimodal_suffix} shows that certified accuracy decreases
monotonically as either the image radius $\epsilon$ or the text budget $d$ increases,
as predicted by the hybrid Neyman--Pearson analysis.
Larger text budgets induce a downward shift of the curves, and for $d \ge 5$
certification is nearly vacuous even around $\epsilon = 0$.
Increasing the smoothing variance $\sigma$ reduces certified accuracy at small
$\epsilon$, while extending certification to larger image perturbations.
For example, at $\epsilon=0.5$ and $d=3$, certified accuracy decreases from $38\%$
with $\sigma=0.5$ to $36\%$ with $\sigma=1.0$.
For $\sigma=1.0$, certification becomes vacuous for text budgets
$d>3$.

\textit{External benchmark check on MM-SafetyBench.}
We additionally run the full pipeline on
MM-SafetyBench~\citep{liu2024mmsafetybench} with $1{,}680$ samples.
Because this benchmark is not curated for interaction-only failures, most
samples are either unimodally safe or unimodally unsafe; only $7.5\%$ satisfy
the interaction-only filter.
On this subset, without retuning, the mean certified text budget is $3.62$ and
the mean certified image radius is $3.37$.
This indicates that the certified accuracy regime observed on the curated
split transfers beyond it, while also explaining why interaction-only
certification has limited coverage on broad multimodal-safety benchmarks.

\textbf{Effect of the Corruption Rate $\beta$.}
Table~\ref{tab:beta_ablation} highlights a clear trade-off induced by the parameter
$\beta$.
Smaller values of $\beta$ certify a larger fraction of examples but only for modest
text budgets $d_{\max}$, whereas larger $\beta$ substantially increase the certified
text budget at the cost of reduced coverage.
Intermediate values provide a balance between these two effects.
In the remainder of the experiment, we fix $\beta=0.25$, which offers a reasonable
compromise between certification coverage and certified text robustness.

\textbf{Discussion of Certification Parameters.}
The parameters $(\sigma,\beta,\tau,d)$ jointly determine the behavior of the hybrid certificate.
The smoothing variance $\sigma$ controls the continuous robustness radius and shapes the discrete Neyman--Pearson bound by smoothing likelihood ratios into a well-conditioned, invertible capacity map.
The corruption rate $\beta$ governs a coverage--budget trade-off: smaller $\beta$ yields higher certified accuracy at small $d$, while larger $\beta$ enables certification at larger discrete budgets at the cost of reduced coverage.
The threshold $\tau$ determines the strictness of certification and, in the safety setting, enforces a conservative requirement that the worst-case smoothed unsafe probability remains above a fixed low level.
Finally, increasing the discrete budget $d$ monotonically enlarges the adversarial set, leading to smaller certified radii and lower certified accuracy.

\begin{table}[h]
    \centering
    \resizebox{\columnwidth}{!}{%
        \begin{tabular}{c|c|c|c}
            \hline
            $\beta$ & Certified examples (\%) & Mean $d_{\max}$ & Mean $r_\star(d_{\max})$ \\
            \hline
            0.1     & 82.35                   & 2.29            & 4.99                     \\
            0.25    & 70.59                   & 3.07            & 3.21                     \\
            0.5     & 58.82                   & 4.00            & 3.24                     \\
            1.0     & 41.18                   & 8.00            & 4.57                     \\
            \hline
        \end{tabular}%
    }
    \caption{Effect of the parameter $\beta$ on certification coverage, maximum certified text budget, and corresponding certified radius for image. We choose $\sigma=1.0$ here for Gaussian noise.}
    \label{tab:beta_ablation}
\end{table}

\textbf{Certification Run Time.}
Certification runtime is dominated by repeated LLaVA-Guard forward passes
rather than by the hybrid Neyman--Pearson computation.
The certificate-side operations, including root finding and discrete
aggregation, add less than one second per datapoint.
The main structural overhead relative to image-only smoothing is that hybrid
certification evaluates a discrete--continuous frontier over several text
budgets, rather than a single scalar image radius.
We therefore report both the default implementation and two efficiency
variants in Table~\ref{tab:runtime}.
\begin{table}[t]
    \centering
    \small
    \setlength{\tabcolsep}{3pt}
    \caption{Average certification time per datapoint for $n=10{,}000$
    Monte Carlo samples on an H100 GPU. One-shot suffix and one-shot $\ell_0$
    certification share the same computational cost.}
    \label{tab:runtime}
    \begin{tabular}{p{0.43\linewidth}p{0.18\linewidth}p{0.28\linewidth}}
        \toprule
        Setting                                   & Time                & Effect                       \\
        \midrule
        Image-only RS                             & $\approx 156$\,s    & Single image radius          \\
        Hybrid RS, default                        & $\approx 500$\,s    & Full $(d,\epsilon)$ frontier \\
        Hybrid RS + batching/FlashAttention       & $\approx 0.7\times$ & Same certificate             \\
        One-shot suffix or $\ell_0$, $d_{\max}=8$ & $\approx 44$\,s     &
        $\bar r: 2.07{\to}1.55$, $\bar d: 1.48{\to}1.23$                                               \\
        Text-only RS, one-shot                    & $\approx 44$\,s     & One-shot estimate            \\
        \bottomrule
    \end{tabular}
\end{table}

Further reductions may be obtained through confidence-sequence early stopping
and input-adaptive sampling; we leave these optimizations to future work.

\subsection{Empirical Attacks on Safety Filtering}
\label{sec:empirical_attack}

Randomized smoothing certifies robustness in terms of a smoothed prediction probability. Accordingly, the relevant empirical question is not worst-case attack success,
but whether direct optimization of the complementary smoothed quantity
\(
\mathbb P_{(Z_1,Z_2)\sim p(\cdot\mid x')}\!\left[f(Z_1,Z_2)=\text{``Safe''}\right]
\)
can drive the smoothed classifier across the decision threshold~$\tau$.
Since certification enforces $\mathbb P(\text{Unsafe})>\tau$, an empirical violation is equivalently
\(
\mathbb P(\text{Safe})>1-\tau.
\)
We implement a distribution-matched adaptive multimodal attack that directly
optimizes a Monte Carlo estimate of this quantity by alternating gradient-based
updates on an append-only text suffix and $\ell_2$-constrained updates on the image,
following~\citep{salman2019provably}.
While structurally similar to prior multimodal attacks~\citep{yin2023vlattack}, the optimization target is the certified
probability itself rather than a surrogate loss on the base classifier. More details on the attack procedure are given in Appendix~\ref{app:multimodal_attack}.
\begin{table}[t]
    \centering
    \caption{Empirical adaptive attack results. These values estimate attack
    strength, not certification confidence. Certification uses the
    Clopper--Pearson lower bound with $n=10{,}000$ samples and risk level
    $\alpha=0.01$.}
    \label{tab:empirical_safe_ratio}
    \begin{tabular}{lc}
        \toprule
        Attack budget                          & Safe\%             \\
        \midrule
        $d=d^{\ast},\,\varepsilon=r^{\ast}$    & $74.286 \pm 4.695$ \\
        \midrule
        $d=d^{\ast},\,\varepsilon=1.5r^{\ast}$ & $75.321 \pm 4.672$ \\
        $d=d^{\ast},\,\varepsilon=3r^{\ast}$   & $76.571 \pm 4.730$ \\
        $d=d^{\ast}+2,\,\varepsilon=r^{\ast}$  & $76.393 \pm 5.123$ \\
        $d=d^{\ast}+4,\,\varepsilon=r^{\ast}$  & $90.214 \pm 2.284$ \\
        \bottomrule
    \end{tabular}
\end{table}
Table~\ref{tab:empirical_safe_ratio} reports empirical attack optimization
results over 20 attack iterations and 20 examples, using
$n_{\text{samples}}=1000$ Monte Carlo samples only to estimate the attack
objective. These samples are not used to certify robustness.
Here $(d^{\ast},r^{\ast})$ denotes the certified text and image budgets returned
by Hybrid RS, while $(d,\varepsilon)$ are the empirical attack budgets.
Safe\% is the fraction of trials in which the smoothed "Safe" probability
exceeds the decision threshold.
With $\tau=4.6\times10^{-5}$, certification is violated only if this probability
exceeds $1-\tau\approx99.995\%$, so intermediate Safe\% values remain far from
breaking the defense, as $\tau$ is very conservative.
Empirically, the attack does not violate the certificate
in agreement with the theoretical
guarantees.

\section{Conclusion}
We introduced a unified randomized smoothing framework for heterogeneous
discrete--continuous perturbations and derived an exact Neyman--Pearson
characterization of the joint worst-case behavior.
Our analysis shows that hybrid smoothing admits a closed-form,
one-dimensional certificate that strictly generalizes classical Gaussian
smoothing and discrete knapsack-based certificates, while revealing why
naive multimodal composition fails even under independent noise.
We demonstrated the practical relevance of this framework on multimodal
safety filtering, providing, to our knowledge, the first model-agnostic
Neyman--Pearson certificate for joint discrete-token and continuous-image
perturbations in interaction-dependent text--image safety filtering.

While safety filtering serves as a natural motivating application, the
proposed framework is not specific to safety.
It applies more broadly to robustness certification in heterogeneous input
spaces, including mixed discrete--continuous settings such as tabular--text
models or multi-sensor fusion, where joint perturbations cannot be reduced
to unimodal guarantees.
Beyond safety, the hybrid Neyman--Pearson formulation provides a principled
tool for analyzing and certifying robustness under heterogeneous noise,
and offers a foundation for narrowing the gap between certified and
empirical robustness in complex multimodal systems.

\textbf{Limitations.}
The main practical cost is the number of VLM forward passes required by
randomized smoothing; the hybrid certificate computation itself is negligible
relative to inference.
Smoothing can also reduce utility, especially under full-text uniform
corruption, although suffix-only smoothing preserves substantially more
semantic content.
The certified lower bound can also be conservative: as in discrete randomized
smoothing, the Neyman--Pearson/knapsack relaxation is tight given only the
clean smoothed probability $p_A$, but the corresponding worst-case base
classifier may be overly pessimistic.
Tighter certificates may require additional structure on the base model, such
as Lipschitz information
\citep{chen2024diffusion,delattre2024lipschitz,delattre2025bridging}.
Finally, interaction-only safety failures are relatively rare in broad
benchmarks, so our main evaluation uses a curated interaction-only split and
an additional MM-SafetyBench transfer check rather than a large naturally
balanced benchmark.

\section*{Acknowledgments}
We thank Sylvain Delattre for the proof reading of Theorem~\ref{thm:main}. \\
This work is partially supported by JST PRESTO JPMJPR23P5, JST CREST
JPMJCR21M2, JST NEXUS JPMJNX25C4.
This work was granted access to the HPC resources of IDRIS under the allocation
A0191016927 and AD011014214R2 made by GENCI.
This work has received support from the French government, managed by the
National Research Agency, under the France 2030 program with the reference
``PR[AI]RIE-PSAI'' (ANR-23-IACL-0008) and ``PEPR-SHARP'' (ANR-23-PEIA-0008).

\section*{Impact Statement}
This work studies certified robustness for heterogeneous
discrete--continuous perturbations, with applications to multimodal
safety filtering. The intended positive impact is to provide
conservative, model-agnostic tools for assessing robustness of systems
exposed to joint text--image adversarial inputs. Potential limitations
include the computational cost of randomized smoothing and the risk that
certificates are misinterpreted as complete safety guarantees outside
the specified threat model. The guarantees apply only to the stated
perturbation budgets, smoothing kernels, classifier, and confidence
parameters.

\section*{Code Availability}

Code for reproducing the main experiments and certificates is available at
\url{https://github.com/tdsai-lab/hybrid-randomized-smoothing}.

\bibliography{references}
\bibliographystyle{icml2025}

\newpage
\appendix
\onecolumn

\section{Appendix}

\subsection{Choice of Threshold \texorpdfstring{$\tau$}{tau}}
\label{sec:tau}

Randomized smoothing certifies robustness by bounding the smoothed classifier's output under worst-case perturbations. For a binary classifier $f:\mathcal X \to \{0,1\}$ and its smoothed version $g(x) = \mathbb E_{z \sim p(\cdot \mid x)}[f(z)]$, the certified radius at confidence level $\tau \in (0,1)$ is
\[
    r_{\mathrm{RS}}(x; \tau) = \sup \left\{ \delta \ge 0 :\ p_{\mathrm{adv}}(\delta) > \tau \right\},
\]
where $p_{\mathrm{adv}}(\delta)$ denotes the worst-case smoothed score under perturbations of radius $\delta$.

\textbf{Role of $\tau$.}
The threshold $\tau$ controls the confidence required for certification. A higher $\tau$ demands that the smoothed classifier remain more confident under perturbation, yielding stronger robustness guarantees but smaller certified radii. Conversely, a lower $\tau$ relaxes the constraint on $g(x_{\mathrm{adv}})$, enlarging the certified radius but tolerating weaker evidence.

In classical settings where certification targets \emph{correct} predictions, low values of $\tau$ lead to weaker guarantees and are considered non-conservative. However, our setting reverses this logic.

\textbf{Certifying Unsafe Predictions.}
We are interested in certifying that $f(x) = 1$ (i.e., the input is flagged as unsafe) remains stable under perturbation. Since $g(x) = \mathbb{P}[f(z) = 1]$, certification at low $\tau$ means we must guarantee that the smoothed unsafe probability remains above a tiny threshold, even in the worst case:
\[
    g(x_{\mathrm{adv}}) > \tau \quad \text{for all admissible } x_{\mathrm{adv}}.
\]
This is a \emph{conservative} guarantee: with $\tau = 4.6 \times 10^{-5}$, certification is only violated if the "Unsafe" probability drops below $0.0046\%$. Thus, a large number of Monte Carlo samples must still yield $f(z) = 1$ to maintain certification. In practice, this demands a  classifier that is highly confident in its unsafe prediction at the clean input.

\subsection{Proof of Theorem~\ref{thm:main}}
\label{app:proof-main}

First, we derivethe  following lemmas,

\begin{lemma}[Continuity of the distribution of discrete mixtures]
    \label{lem:Gamma_continuous}
    Let $Z_1$ be discrete and $Y$ be a nonnegative random variable independent of $Z_1$.
    Assume that for every $z_1$ with $\mathbb P(Z_1=z_1)>0$, the law of $a(z_1)Y$ has no
    atoms on $(0,\infty)$, where $a(z_1)\ge 0$ is a deterministic scalar.
    Then $X:=a(Z_1)Y$ has no atoms on $(0,\infty)$, hence its distribution function $F_X$
    is continuous on $(0,\infty)$.
\end{lemma}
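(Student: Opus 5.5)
The plan is to show directly that $\mathbb P(X=c)=0$ for every $c>0$, by conditioning on the discrete variable $Z_1$ and using countable additivity.

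\textbf{Step 1: decompose the atom over the support of $Z_1$.} Since $Z_1$ is discrete, its law is concentrated on a countable set $S=\{z_1:\mathbb P(Z_1=z_1)>0\}$, with $\mathbb P(Z_1\in S)=1$. For fixed $c>0$, the law of total probability (a countable sum, justified by $\sigma$-additivity) gives
\[
\mathbb P(X=c)=\sum_{z_1\in S}\mathbb P\bigl(Z_1=z_1,\ a(z_1)Y=c\bigr).
\]

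\textbf{Step 2: use independence to factor each term.} Because $Z_1$ and $Y$ are independent and $a(z_1)$ is deterministic, each summand equals
\[
\mathbb P(Z_1=z_1)\,\mathbb P\bigl(a(z_1)Y=c\bigr).
\]
By hypothesis, $a(z_1)Y$ has no atoms on $(0,\infty)$ for every $z_1\in S$. Since $c>0$, each second factor is $0$, so every summand vanishes and $\mathbb P(X=c)=0$.

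This hypothesis matters in the degenerate case $a(z_1)=0$. There $a(z_1)Y\equiv 0$, but its atom sits at $0\notin(0,\infty)$, so the assumption is consistent and the argument is unaffected. This is exactly why the statement is restricted to $(0,\infty)$.

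\textbf{Step 3: conclude continuity.} A distribution function $F_X$ is right-continuous with left limits, and its jump at $c$ equals $F_X(c)-F_X(c^-)=\mathbb P(X=c)$. Having no atoms on $(0,\infty)$ therefore gives continuity of $F_X$ at every point of $(0,\infty)$.

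There is no real obstacle here. The only points needing care are that the support $S$ is countable, so the decomposition in Step 1 is a legitimate countable sum, and that independence justifies the factorization in Step 2. Both are standard.
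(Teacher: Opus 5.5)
Your proof is correct and follows essentially the same route as the paper: decompose $\mathbb P(X=c)$ over the countable support of $Z_1$, factor each term by independence, and observe that each factor $\mathbb P(a(z_1)Y=c)$ vanishes for $c>0$, so $F_X$ has no jumps on $(0,\infty)$. Your explicit treatment of the countable support $S$ and of the degenerate case $a(z_1)=0$ is just a slightly more careful write-up of the same argument.
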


\begin{proof}
    Fix $x>0$. By the law of total probability and independence,
    \[
        \mathbb P(X=x)
        =\sum_{z_1}\mathbb P(Z_1=z_1)\,\mathbb P(a(z_1)Y=x\mid Z_1=z_1)
        =\sum_{z_1}\mathbb P(Z_1=z_1)\,\mathbb P(a(z_1)Y=x).
    \]
    Each term vanishes by assumption, hence $\mathbb P(X=x)=0$ for all $x>0$.
    Therefore $F_X$ has no jump on $(0,\infty)$ and is continuous there.
\end{proof}

\medskip

For a real-valued random variable $X$ with distribution function
$F_X(t)=\mathbb P(X\le t)$, we define the (left-continuous) quantile function
as the generalized inverse
\[
    Q_X(p):=\inf\{t\in\mathbb R:\ F_X(t)\ge p\},\qquad p\in(0,1).
\]

\begin{lemma}[Quantile identity]
    \label{lem:Lorenz_identity}
    Let $X\ge 0$ be integrable with distribution function $F_X$ and quantile function
    $Q_X$. Fix $p\in(0,1)$ and set $q:=Q_X(p)$. If $F_X$ is continuous at $q$, then
    \[
        \mathbb E\!\left[X\,\mathbf 1\{X\le Q_X(p)\}\right]
        =\int_0^p Q_X(u)\,du .
    \]
\end{lemma}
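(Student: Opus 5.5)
The identity follows from the standard quantile coupling: realize $X$ as $Q_X(U)$ with $U\sim\mathrm{Unif}(0,1)$ and compare the event $\{X\le q\}$ with $\{U\le p\}$. Concretely, $Q_X$ is nondecreasing and left-continuous, and satisfies the Galois relation $Q_X(u)\le t \iff u\le F_X(t)$. This gives $Q_X(U)\overset{d}{=}X$, hence
\[
\mathbb E\!\left[X\,\mathbf 1\{X\le q\}\right]=\mathbb E\!\left[Q_X(U)\,\mathbf 1\{Q_X(U)\le q\}\right]=\mathbb E\!\left[Q_X(U)\,\mathbf 1\{U\le F_X(q)\}\right]=\int_0^{F_X(q)} Q_X(u)\,du,
\]
where the middle equality is the Galois relation applied with $t=q$. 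Integrability of $X\ge0$ ensures all quantities are finite.

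It then remains to show $F_X(q)=p$. By definition of $q=Q_X(p)$ and right-continuity of $F_X$, we have $F_X(q)\ge p$. For every $t<q$ we have $F_X(t)<p$, so $F_X(q^-)\le p$. Continuity of $F_X$ at $q$ gives $F_X(q)=F_X(q^-)$, and combining the two inequalities yields $F_X(q)=p$. Substituting into the display gives $\int_0^p Q_X(u)\,du$.

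The only delicate point is verifying the Galois equivalence for this left-continuous generalized inverse; this is where right-continuity of $F_X$ is needed. If $u\le F_X(t)$, then $t$ belongs to the set defining $Q_X(u)$, so $Q_X(u)\le t$. Conversely, if $Q_X(u)\le t$, take $t_n\downarrow Q_X(u)$ with $F_X(t_n)\ge u$; right-continuity gives $F_X(Q_X(u))\ge u$, and monotonicity then gives $F_X(t)\ge u$. Without the continuity hypothesis the argument fails: if $q$ is an atom with $F_X(q)>p$, the two sides differ by $q\,(F_X(q)-p)$. This is why the lemma is paired with Lemma~\ref{lem:Gamma_continuous}, which supplies exactly this continuity for the hybrid likelihood ratio.
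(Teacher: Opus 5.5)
Your proof is correct, but it takes a different route from the paper. The paper introduces no uniform variable. It applies the layer-cake formula to $X\,\mathbf 1\{X\le q\}$, which gives $\int_0^q \mathbb P(t<X\le q)\,dt=\int_0^q (p-F_X(t))\,dt$, using $F_X(q)=p$ at this stage. It then writes $\int_0^p Q_X(u)\,du$ as a double integral and swaps the order by Fubini, so both sides reduce to the same expression $\int_0^q (p-F_X(t))\,dt$.

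The inner computation there, $\int_0^p \mathbf 1\{t<Q_X(u)\}\,du=p-F_X(t)$ for $t<q$, is your Galois relation in another form. The paper only cites it as ``the definition of the generalized inverse'', and it asserts $F_X(q)=p$ in one line; you verify both, including the right-continuity step.

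Your quantile-coupling argument gives slightly more. It proves the unconditional identity $\mathbb E[X\,\mathbf 1\{X\le q\}]=\int_0^{F_X(q)}Q_X(u)\,du$, so continuity enters only through $F_X(q)=p$, and the defect $q\,(F_X(q)-p)$ at an atom is explicit. It also does not need $X\ge 0$; integrability suffices. The paper's layer-cake step and its representation $Q_X(u)=\int_0^\infty \mathbf 1\{t\le Q_X(u)\}\,dt$ in the Fubini step both rely on nonnegativity. In exchange, the paper's version is purely analytic and does not need the distributional identity $Q_X(U)\overset{d}{=}X$.
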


\begin{proof}
    Since $F_X$ is continuous at $q$, we have $F_X(q)=p$.
    Using the layer-cake representation for the nonnegative random variable
    $X\,\mathbf 1\{X\le q\}$,
    \[
        \mathbb E[X\,\mathbf 1\{X\le q\}]
        =\int_0^{q}\mathbb P(t<X\le q)\,dt
        =\int_0^{q}\bigl(p-F_X(t)\bigr)\,dt.
    \]
    On the other hand, by Fubini’s theorem and the definition of the generalized inverse,
    \[
        \int_0^p Q_X(u)\,du
        =\int_0^{q}\left(\int_0^p \mathbf 1\{t\le Q_X(u)\}\,du\right)dt
        =\int_0^{q}\bigl(p-F_X(t)\bigr)\,dt.
    \]
    The two expressions coincide.
\end{proof}

We give a formal version of the main result and proof

\begin{theorem}[Hybrid randomized smoothing certificate]
    \label{thm:main_rigorous}
    Let $\mathcal X_1$ be a finite or countable set and let $d\in\mathbb N$.
    Fix $\sigma>0$ and $x=(x_1,x_2)\in\mathcal X_1\times\mathbb R^D$.
    Assume the randomized smoothing corruption distribution factorizes as
    \[
        p(z\mid x)=p_1(z_1\mid x_1)\,p_2(z_2\mid x_2),
        \qquad
        p_2(\cdot\mid x_2)=\mathcal N(x_2,\sigma^2 I_d),
    \]
    with $p_1(\cdot\mid x_1)$ a probability mass function on $\mathcal X_1$.

    Fix an adversarial input $x_{\adv}=(x_{1,\adv},x_{2,\adv})$ with
    $D_1(x_1,x_{1,\adv})\le d$ and $\|x_{2,\adv}-x_2\|_2\le \epsilon$.
    Write $\Delta:=x_{2,\adv}-x_2$ and $r:=\|\Delta\|_2$.
    Assume absolute continuity in the discrete channel: for all $z_1$,
    \[
        p_1(z_1\mid x_1)=0 \implies p_1(z_1\mid x_{1,\adv})=0,
    \]
    and define, for $p_1(z_1\mid x_1)>0$,
    \[
        \gamma_1(z_1):=\frac{p_1(z_1\mid x_{1,\adv})}{p_1(z_1\mid x_1)}.
    \]

    Let $f:\mathcal X_1\times\mathbb R^D\to\{0,1\}$ be any measurable base classifier and
    define the smoothed value at the clean input
    \[
        p_A:=\mathbb E_{z\sim p(\cdot\mid x)}[f(z)]\in(0,1).
    \]

    For a fixed discrete adversarial input $x_{1,\adv}$ and a fixed continuous radius $r>0$,
    define the \emph{adversarial value}
    \begin{align}
        \label{eq:def_V}
        V(x_{1,\adv};r)
        :=
        \inf_{0\le h\le 1}\
        \mathbb E_{z\sim p(\cdot\mid x_{\adv})}[h(z)]
        \quad\text{s.t.}\quad
        \mathbb E_{z\sim p(\cdot\mid x)}[h(z)]=p_A .
    \end{align}

    Then the unique optimal solution of \eqref{eq:def_V} is given by a likelihood-ratio
    threshold test
    \[
        h^\star(z_1,z_2)=\mathbf 1\{\gamma(z_1,z_2;r)\le t^\star(r)\},
        \qquad
        \gamma(z_1,z_2;r):=\gamma_1(z_1)\gamma_2(z_2;r),
    \]
    where $\gamma_2$ is the Gaussian likelihood ratio
    \[
        \gamma_2(z_2;r)
        :=\frac{\phi_\sigma(z_2-x_{2,\adv})}{\phi_\sigma(z_2-x_2)}
        =\exp\!\left(\frac{1}{\sigma^2}\Delta^\top\!\left(z_2-\frac{x_2+x_{2,\adv}}{2}\right)\right),
    \]
    and $t^\star(r)>0$ is the unique solution of
    \[
        F(t;r)=p_A,
        \qquad
        F(t;r):=\sum_{z_1}p_1(z_1\mid x_1)\,
        \Phi\!\left(
        \frac{\frac{r^2}{2}+\sigma^2(\log t-\log\gamma_1(z_1))}{\sigma r}
        \right).
    \]

    The corresponding optimal adversarial value admits the closed form
    \[
        V(x_{1,\adv};r)
        =
        \sum_{z_1}p_1(z_1\mid x_{1,\adv})\,
        \Phi\!\left(
        \frac{\frac{r^2}{2}+\sigma^2(\log t^\star(r)-\log\gamma_1(z_1))}{\sigma r}
        -\frac{r}{\sigma}
        \right).
    \]

    Moreover, for every fixed $x_{1,\adv}$, the map $r\mapsto V(x_{1,\adv};r)$ is
    nonincreasing on $(0,\infty)$. Hence, under the constraint
    $\|x_{2,\adv}-x_2\|_2\le \epsilon$, the worst case is attained at $r=\epsilon$.

    Finally, defining the hybrid worst-case smoothed value under budgets $(d,\epsilon)$ as
    \[
        p_{\adv}(d,\epsilon)
        :=\inf_{\substack{D_1(x_1,x_{1,\adv})\le d}} V(x_{1,\adv};\epsilon),
    \]
\end{theorem}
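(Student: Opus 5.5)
We prove the statement in four steps. First we reduce the problem to a scalar random variable. Then we solve the Neyman--Pearson problem and evaluate it in closed form. Finally we establish monotonicity in $r$ and take the outer infimum.

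\textbf{Reduction to one dimension.} Project the Gaussian onto the shift direction. Under $Z_2\sim\mathcal N(x_2,\sigma^2 I)$, set $W:=\Delta^\top(Z_2-x_2)/(\sigma r)\sim\mathcal N(0,1)$. Then
\[
\log\gamma_2(Z_2;r)=\frac{r}{\sigma}W-\frac{r^2}{2\sigma^2}.
\]
Under $Z_2\sim\mathcal N(x_{2,\adv},\sigma^2I)$, the same $W$ is distributed as $\mathcal N(r/\sigma,1)$. Hence for each $z_1$ with $p_1(z_1\mid x_1)>0$,
\[
\{\gamma_1(z_1)\gamma_2\le t\}=\Bigl\{W\le \tfrac{r^2/2+\sigma^2(\log t-\log\gamma_1(z_1))}{\sigma r}\Bigr\}.
\]
Summing conditional probabilities over $z_1$ by independence gives $\mathbb P_{p(\cdot\mid x)}(\gamma\le t)=F(t;r)$. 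The same computation under the adversarial law gives the expression for $V$, with the argument shifted by $-r/\sigma$. Points with $\gamma_1(z_1)=0$ have $\log\gamma_1=-\infty$, so $\Phi=1$ there. Absolute continuity ensures that $z_1$ outside the clean support carry no adversarial mass, so they are negligible.

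\textbf{Existence and uniqueness of $t^\star$.} We view $F(\cdot;r)$ as the distribution function of $\gamma=\gamma_1(Z_1)\gamma_2(Z_2)$. Apply Lemma~\ref{lem:Gamma_continuous} with $a=\gamma_1$ and $Y=\gamma_2$, which is log-normal and hence atomless; this shows $F$ is continuous in $t$ on $(0,\infty)$. Directly, $F$ is a countable sum of strictly increasing continuous functions $\Phi(\cdot)$ weighted by $p_1(z_1\mid x_1)$. Dominated convergence gives continuity and the limits $F\to p_1(\gamma_1=0)$ as $t\to0$ and $F\to1$ as $t\to\infty$. Terms with $\gamma_1>0$ make $F$ strictly increasing.

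One subtle point is that if $\mathbb P(\gamma_1(Z_1)=0)\ge p_A$, the root would not exist. This case must be excluded, or handled by noting that $V=0$ there. I would add it as a remark, or argue that the statement implicitly requires $\mathbb P(\gamma_1=0)<p_A$. Under that condition the intermediate value theorem gives a unique $t^\star(r)$.

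\textbf{Neyman--Pearson optimality.} Invoke Lemma~\ref{lem:np-rs}. Because $\gamma$ has no atoms on $(0,\infty)$, the threshold test meets the constraint exactly, with no randomization. For uniqueness, take any feasible $h$ and apply the standard argument: the quantity $\int (h^\star-h)(\gamma-t^\star)\,dP\ge0$ has a nonnegative integrand, with equality only if $h=h^\star$ almost everywhere off $\{\gamma=t^\star\}$. That set is null. Substituting $h^\star$ into the adversarial expectation gives the closed form for $V$.

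\textbf{Monotonicity in $r$ (main obstacle).} Differentiating implicitly through $t^\star(r)$ is messy. I would instead use a coupling/data-processing argument. The adversary at radius $r$ can be represented as the adversary at radius $r'>r$ followed by a channel. More concretely, the one-dimensional Gaussian experiments $\{\mathcal N(0,1),\mathcal N(\mu,1)\}$ become less informative as $\mu$ decreases: $\mathcal N(\mu,1)$ versus $\mathcal N(0,1)$ is a garbling of $\mathcal N(\mu',1)$ versus $\mathcal N(0,1)$ for $\mu<\mu'$. This holds because each is obtained by Markov mixing along the line joining the two means, using convexity and the fact that the family $\{\mathcal N(s\mu',1)\}_{s\in[0,1]}$ is a Gaussian location family. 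Tensoring with the fixed discrete pair $(p_1(\cdot\mid x_1),p_1(\cdot\mid x_{1,\adv}))$ preserves this garbling relation.

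The NP value $V$ is the lower boundary of the testing region, i.e.\ the ROC curve. By Blackwell's ordering this boundary is pointwise lower for the more informative experiment, so $V(\cdot;r')\le V(\cdot;r)$.

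A cleaner alternative is the geometric observation that the feasible set of $h$ is independent of $r$ once $x_{2,\adv}$ is varied along the ray $x_2+s\Delta/r$. I would try to express the garbling by an explicit kernel: pick a midpoint shift and a Gaussian convolution. If that proves awkward, I would fall back on differentiating $V$ using $\partial_r F(t^\star(r);r)=0$ and checking the sign via the Gaussian identity $\phi(a-\mu)=\phi(a)e^{\mu a-\mu^2/2}$, which makes the cross terms factor through $t^\star$.

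Since $V$ is nonincreasing, the infimum over $r\le\epsilon$ is attained at $r=\epsilon$. The infimum over $x_{2,\adv}$ depends only on $r$, by rotation invariance. Taking the infimum over the discrete adversaries then yields $p_{\adv}(d,\epsilon)$ as stated.
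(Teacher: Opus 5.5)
Your first three steps match the paper's Steps 1--3 (projection onto $\Delta$, null hyperplane level sets, closed forms for $F$ and $V$). The real difference is the monotonicity step.

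\textbf{The paper's route.} It writes $V(x_{1,\adv};r)=\mathbb E_w[\Gamma_r\mathbf 1\{\Gamma_r\le t^\star(r)\}]=\int_0^{p_A}Q_{\Gamma_r}(u)\,du$. It then factors $\Gamma_r=\gamma_1(Z_1)Y_r$ with $Y_r\stackrel{d}{=}\exp(B_s-s/2)$, $s=r^2/\sigma^2$. The martingale property and conditional Jensen give $\Gamma_{r_1}\le_{\mathrm{cx}}\Gamma_{r_2}$, and the conclusion follows from the theorem that convex order with equal means implies Lorenz order.

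\textbf{Your garbling route.} Your Blackwell argument is the dual of this: for dichotomies, a garbling exists iff the null likelihood ratios are convex-ordered. It is more elementary, but the garbling you sketch is not the right one. A mixture along the segment of means is not Gaussian, and a shift or a pure convolution moves or spreads the clean law. The kernel that works is $K(z_2)=x_2+c(z_2-x_2)+\sigma\sqrt{1-c^2}\,\xi$, with $\xi\sim\mathcal N(0,I)$ independent and $c=r/r'\in(0,1]$. It fixes $\mathcal N(x_2,\sigma^2 I)$ and sends $\mathcal N(x_2+\Delta',\sigma^2 I)$ to $\mathcal N(x_2+c\Delta',\sigma^2 I)$. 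Take any $h$ feasible at radius $r$, with its direction aligned with $\Delta'$ (allowed by rotation invariance). Then $\tilde h(z_1,z_2):=\mathbb E_\xi[h(z_1,K(z_2))]$ is feasible at radius $r'$ with the same adversarial value, so $V(x_{1,\adv};r')\le V(x_{1,\adv};r)$. This works directly on the infimum definition of $V$. It needs no quantiles, no continuity of $F$, and not even the existence of $t^\star$.

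\textbf{Your differentiation fallback.} This is also complete and short. Let $\phi=\Phi'$ and let $a(z_1)$ be the argument of $\Phi$ in $F$. The identity $\gamma_1(z_1)\,\phi\bigl(a(z_1)-r/\sigma\bigr)=t^\star(r)\,\phi\bigl(a(z_1)\bigr)$, combined with $\tfrac{d}{dr}F(t^\star(r);r)=0$, gives $\tfrac{d}{dr}V(x_{1,\adv};r)=-\tfrac{t^\star(r)}{\sigma}\sum_{z_1}p_1(z_1\mid x_1)\,\phi\bigl(a(z_1)\bigr)<0$. That is strict decrease, at the price of justifying implicit differentiation and the interchange with a countable sum.

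\textbf{Existence of $t^\star$.} Your caveat is correct and points to a real flaw in the paper's Step 2, which asserts $F(t;r)\to 0$ as $t\downarrow 0$. Terms with $\gamma_1(z_1)=0$ contribute $\Phi(+\infty)=1$, so the limit is $\mathbb P_{x_1}(\gamma_1(Z_1)=0)$. For example, it is $1-\beta^d$ in the paper's own absorbing-kernel formula. A root $t^\star(r)>0$ exists iff $\mathbb P_{x_1}(\gamma_1(Z_1)=0)<p_A$. Otherwise $V=0$ and no threshold test with $t^\star>0$ exists.

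\textbf{Minor slip.} In your uniqueness argument the integrand $(h^\star-h)(\gamma-t^\star)$ is nonpositive, not nonnegative; the conclusion is unchanged.
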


\begin{proof}
    Let
    \[
        w(z_1,z_2):=p_1(z_1\mid x_1)\,\phi_\sigma(z_2-x_2),
        \qquad
        v(z_1,z_2):=p_1(z_1\mid x_{1,\adv})\,\phi_\sigma(z_2-x_{2,\adv}),
    \]
    so that $w$ and $v$ are probability measures on $\mathcal X_1\times\mathbb R^D$ and
    $\gamma(\cdot;r)=v/w$ is the Radon--Nikodym derivative of $v$ w.r.t.\ $w$.
    The absolute-continuity assumption on $p_1$ ensures $v\ll w$.

    \textbf{Step 1: Neyman--Pearson Form of the Optimizer.}
    Problem~\eqref{eq:def_V} is the classical Neyman--Pearson problem
    between $(w,v)$.
    By the Neyman--Pearson lemma for general measures
    ,
    There exists an optimal test of the form
    \[
        h^\star=\mathbf 1\{\gamma<t\}+\alpha\,\mathbf 1\{\gamma=t\}
        \quad\text{for some }t\ge 0,\ \alpha\in[0,1],
    \]
    with $\mathbb E_w[h^\star]=p_A$.

    We now show that for $r>0$ there are no ties under $w$, so we may take $\alpha=0$.
    Fix $z_1$ and $t>0$. Since
    \[
        \gamma(z_1,z_2;r)=\gamma_1(z_1)\exp\!\left(\frac{1}{\sigma^2}\Delta^\top\!\left(z_2-\frac{x_2+x_{2,\adv}}{2}\right)\right),
    \]
    the level set $\{\gamma(z_1,\cdot;r)=t\}$ is either empty or a hyperplane
    $\{z_2:\Delta^\top z_2=c\}$ for some $c\in\mathbb R$.
    Because $w(\cdot\mid Z_1=z_1)$ is Gaussian with a continuous density on $\mathbb R^D$,
    any hyperplane has $w$-measure $0$.
    Summing over countably many $z_1$ yields $w(\gamma=t)=0$.
    Thus, for $r>0$, an optimal test can be taken as

    \begin{align}
        \label{eq:fstar_threshold}
        h^\star(z_1,z_2)=\mathbf 1\{\gamma(z_1,z_2;r)\le t\}.
    \end{align}

    \textbf{Step 2: Likelihood-Ratio CDF $F(t;r)$ and Uniqueness of $t^\star(r)$.}
    Fix $r>0$ and $t>0$. The acceptance set conditional on $Z_1=z_1$ is

    \begin{align}
        \label{eq:halfspace}
        \gamma_1(z_1)\gamma_2(z_2;r)\le t
        \iff
        \Delta^\top z_2 \le \frac{r^2}{2}+\sigma^2(\log t-\log\gamma_1(z_1)).
    \end{align}
    Let $Z_2\sim\mathcal N(x_2,\sigma^2 I_d)$ and define the 1D projection
    $U:=\Delta^\top(Z_2-x_2)/r\sim\mathcal N(0,\sigma^2)$.
    Then \eqref{eq:halfspace} is equivalent to
    $U\le \big(\frac{r^2}{2}+\sigma^2(\log t-\log\gamma_1(z_1))\big)/r$,
    so
    \[
        \mathbb P(\gamma_1(z_1)\gamma_2(Z_2;r)\le t)
        =
        \Phi\!\left(
        \frac{\frac{r^2}{2}+\sigma^2(\log t-\log\gamma_1(z_1))}{\sigma r}
        \right).
    \]
    Therefore, using \eqref{eq:fstar_threshold},
    \[
        F(t;r):=\mathbb E_w[h^\star]
        =\sum_{z_1}p_1(z_1\mid x_1)\,
        \Phi\!\left(
        \frac{\frac{r^2}{2}+\sigma^2(\log t-\log\gamma_1(z_1))}{\sigma r}
        \right).
    \]
    Each summand is continuous and strictly increasing in $t$ (it is $\Phi$ of an affine function of $\log t$),
    hence $F(\cdot;r)$ is continuous and strictly increasing.
    Moreover, $F(t;r)\to 0$ as $t\downarrow 0$ and $F(t;r)\to 1$ as $t\uparrow\infty$.
    Since $p_A\in(0,1)$, there exists a unique $t^\star(r)>0$ such that

    \begin{align}
        \label{eq:tstar_def}
        F(t^\star(r);r)=p_A.
    \end{align}

    \textbf{Step 3: Closed Form for the Optimal Adversarial Value.}
    Under $v$, we have $Z_2'\sim\mathcal N(x_{2,\adv},\sigma^2 I_d)=\mathcal N(x_2+\Delta,\sigma^2 I_d)$.
    Let $U':=\Delta^\top(Z_2'-(x_2+\Delta))/r\sim\mathcal N(0,\sigma^2)$.
    Then $\Delta^\top Z_2'=\Delta^\top x_2+r^2+rU'$ and \eqref{eq:halfspace} becomes
    \[
        U'\le \frac{\frac{r^2}{2}+\sigma^2(\log t-\log\gamma_1(z_1))}{r}-r.
    \]
    Hence
    \[
        \mathbb P(\gamma_1(z_1)\gamma_2(Z_2';r)\le t)
        =
        \Phi\!\left(
        \frac{\frac{r^2}{2}+\sigma^2(\log t-\log\gamma_1(z_1))}{\sigma r}-\frac{r}{\sigma}
        \right),
    \]
    and therefore
    \[
        \mathbb E_v[\mathbf 1\{\gamma\le t\}]
        =
        \sum_{z_1}p_1(z_1\mid x_{1,\adv})\,
        \Phi\!\left(
        \frac{\frac{r^2}{2}+\sigma^2(\log t-\log\gamma_1(z_1))}{\sigma r}-\frac{r}{\sigma}
        \right).
    \]
    Setting $t=t^\star(r)$ from \eqref{eq:tstar_def} yields the stated expression for $V(x_{1,\adv};r)$.

    \textbf{Step 4: monotonicity of $r\mapsto V(x_{1,\adv};r)$ via convex order.}
    We now prove that for fixed $x_{1,\adv}$, the optimal value $V(x_{1,\adv};r)$ is nonincreasing in $r$.

    \textbf{4.1 Identify $V$ as a Lower Partial Expectation of the Likelihood Ratio.}
    Let $(Z_1,Z_2)\sim w$ and define the likelihood ratio random variable
    \[
        \Gamma_r:=\gamma(Z_1,Z_2;r)=\frac{v(Z_1,Z_2)}{w(Z_1,Z_2)}.
    \]
    Then for any measurable set $A$,
    \[
        \mathbb P_v((Z_1,Z_2)\in A)=\mathbb E_w[\Gamma_r\,\mathbf 1_A].
    \]
    In particular, letting $A_t(r):=\{\Gamma_r\le t\}$,
    \[
        F(t;r)=\mathbb P_w(\Gamma_r\le t),\qquad
        \mathbb E_v[\mathbf 1_{A_t(r)}]=\mathbb E_w[\Gamma_r\,\mathbf 1\{\Gamma_r\le t\}].
    \]
    For $r>0$, the distribution of $\Gamma_r$ under $w$ is continuous as countable mixture of continuous distributions,
    so $t^\star(r)$ in \eqref{eq:tstar_def} is the $p_A$-quantile:
    \[
        t^\star(r)=Q_{\Gamma_r}(p_A),
    \]
    where $Q_X$ denotes the (left-continuous) quantile function.
    Hence

    \begin{align}
        V(x_{1,\adv};r)=\mathbb E_w\!\left[\Gamma_r\,\mathbf 1\{\Gamma_r\le Q_{\Gamma_r}(p_A)\}\right].
        \label{eq:V_as_LPE}
    \end{align}

    \textbf{4.2 Factorization of $\Gamma_r$ Under $w$.}
    Under $w$, $Z_1\sim p_1(\cdot\mid x_1)$ and $Z_2-x_2\sim \mathcal N(0,\sigma^2 I_d)$ are independent.
    Let $G\sim\mathcal N(0,1)$ and set
    \[
        Y_r:=\exp\!\left(\frac{r}{\sigma}G-\frac{r^2}{2\sigma^2}\right),
        \qquad \mathbb E[Y_r]=1.
    \]
    Then one checks, by the standard Gaussian likelihood-ratio algebra, that

    \begin{align}
        \label{eq:Gamma_factor}
        \Gamma_r=\gamma_1(Z_1)\,Y_r,
    \end{align}

    and that $\gamma_1(Z_1)$ and $Y_r$ are independant.

    \textbf{4.3 Convex Order Growth of the Lognormal Factor.}
    Let $(B_t)_{t\ge 0}$ be standard Brownian motion with natural filtration
    $(\mathcal F_t)_{t\ge 0}$ and define
    \[
        M_t := \exp(B_t - t/2), \qquad t\ge 0.
    \]
    Then $(M_t)$ is a positive martingale and satisfies $\mathbb E[M_t]=1$ for all $t$.
    For $0\le s\le t$, the martingale property yields
    \[
        \mathbb E[M_t \mid \mathcal F_s] = M_s .
    \]
    Let $\varphi:\mathbb R_+\to\mathbb R$ be any convex function for which the expectations
    exist. By conditional Jensen’s inequality,
    \[
        \varphi(M_s)
        = \varphi\!\left(\mathbb E[M_t \mid \mathcal F_s]\right)
        \le \mathbb E[\varphi(M_t)\mid \mathcal F_s].
    \]
    Taking expectations gives
    \[
        \mathbb E[\varphi(M_s)] \le \mathbb E[\varphi(M_t)].
    \]

    Now set $s=r_1^2/\sigma^2$ and $t=r_2^2/\sigma^2$ with $0\le r_1\le r_2$.
    Since $B_t \stackrel{d}{=} (r/\sigma)G$ for $G\sim\mathcal N(0,1)$, we have
    \[
        M_{r^2/\sigma^2}
        \stackrel{d}{=}
        \exp\!\left(\frac{r}{\sigma}G-\frac{r^2}{2\sigma^2}\right)
        =: Y_r,
        \qquad \mathbb E[Y_r]=1.
    \]
    Therefore, for all convex $\varphi$,
    \[
        \mathbb E[\varphi(Y_{r_1})] \le \mathbb E[\varphi(Y_{r_2})],
        \qquad
        \mathbb E[Y_{r_1}] = \mathbb E[Y_{r_2}].
    \]

    Finally, since $\gamma_1(Z_1)\ge 0$ is independent of $Y_r$, the map
    $y\mapsto \varphi(ay)$ is convex for every fixed $a\ge 0$.
    Conditioning on $a=\gamma_1(Z_1)$ yields
    \[
        \mathbb E[\varphi(\gamma_1(Z_1)Y_{r_1})]
        \le
        \mathbb E[\varphi(\gamma_1(Z_1)Y_{r_2})].
    \]
    Using $\mathbb E[Y_{r_1}]=\mathbb E[Y_{r_2}]=1$, this implies
    \[
        \Gamma_{r_1}\le_{\mathrm{cx}} \Gamma_{r_2}
        \quad\text{and}\quad
        \mathbb E[\Gamma_{r_1}]=\mathbb E[\Gamma_{r_2}]=1.
        \tag{6}\label{eq:Gamma_convex_order}
    \]

    \textbf{4.4 Convex Order Implies Monotonicity of Lower Partial Expectations.}

    Let $X\ge 0$ be integrable with quantile function $Q_X$ and define the lower partial
    expectation (Lorenz integral)
    \[
        L_X(p):=\mathbb E\!\left[X\,\mathbf 1\{X\le Q_X(p)\}\right],\qquad p\in(0,1).
    \]
    By Lemma~\ref{lem:Lorenz_identity}, if $F_X$ is continuous at $Q_X(p)$,
    \begin{equation}
        \label{eq:Lorenz_identity}
        L_X(p)=\int_0^p Q_X(u)\,du .
    \end{equation}

    In our application, $X=\Gamma_r=\gamma_1(Z_1)Y_r$ under $w$, where $Z_1$ is discrete,
    $Y_r$ is lognormal, and $Z_1$ is independent of $Y_r$.
    For any $z_1$ with $\mathbb P(Z_1=z_1)>0$, if $\gamma_1(z_1)>0$ then
    $\gamma_1(z_1)Y_r$ has a continuous density on $(0,\infty)$, and if
    $\gamma_1(z_1)=0$ then $\gamma_1(z_1)Y_r\equiv 0$.
    Lemma~\ref{lem:Gamma_continuous} therefore implies that $F_{\Gamma_r}$ is continuous on
    $(0,\infty)$, in particular at $Q_{\Gamma_r}(p_A)=t^\star(r)>0$.

    \medskip

    For nonnegative integrable random variables $X,Y$ with equal mean, convex order implies
    Lorenz order:
    \begin{equation}
        \label{eq:Lorenz_order}
        X\le_{\mathrm{cx}}Y\ \text{ and }\ \mathbb E[X]=\mathbb E[Y]
        \quad\Longrightarrow\quad
        \int_0^p Q_X(u)\,du \ge \int_0^p Q_Y(u)\,du\qquad\forall p\in(0,1),
    \end{equation}
    see \citep{shaked2007stochastic}[Theorem~3.A.10].
    Using \eqref{eq:Lorenz_identity}, this is equivalent to $L_X(p)\ge L_Y(p)$.

    Applying \eqref{eq:Lorenz_order} to \eqref{eq:Gamma_convex_order} yields, for
    $0\le r_1\le r_2$,
    \[
        L_{\Gamma_{r_1}}(p_A)\ge L_{\Gamma_{r_2}}(p_A).
    \]
    Using \eqref{eq:V_as_LPE}, we conclude
    \[
        V(x_{1,\adv};r_1)\ge V(x_{1,\adv};r_2),
    \]
    so the map $r\mapsto V(x_{1,\adv};r)$ is nonincreasing on $(0,\infty)$.
    Consequently, under the constraint $\|x_{2,\adv}-x_2\|_2\le \epsilon$, the worst case is
    attained at $r=\epsilon$.

    \textbf{Step 5: outer discrete adversary.
        .}
    For each admissible $x_{1,\adv}$ with $D_1(x_1,x_{1,\adv})\le d$, Step~4 shows
    \[
        \inf_{\|x_{2,\adv}-x_2\|_2\le \epsilon} \ \inf_{h\ \text{s.t. }\mathbb E_w[h]=p_A}\ \mathbb E_v[h]
        =
        V(x_{1,\adv};\epsilon).
    \]
    Taking the infimum over $x_{1,\adv}$ yields
    \[
        p_{\adv}(d,\epsilon)=\inf_{\substack{D_1(x_1,x_{1,\adv})\le d}} V(x_{1,\adv};\epsilon).
    \]

\end{proof}

\subsection{Recovery of the Discrete Certificate as $\sigma \to \infty$}
\label{subsec:recovery_knapsack_limit}
In the hybrid randomized smoothing formulation, the Gaussian component introduced
on the continuous channel acts only as a smooth relaxation of the discrete
Neyman--Pearson problem.
The corresponding likelihood-ratio CDF depends on $(r,\sigma)$ exclusively through the
dimensionless ratios $r/(2\sigma)$ and $\sigma/r$, so that increasing $\sigma$
progressively sharpens the relaxation.
As $\sigma \to \infty$ with $r>0$ fixed, the Gaussian CDF appearing in the capacity
equation converges pointwise to a step function, and the likelihood-ratio CDF
$F_\sigma(t;r)$ converges to the discrete staircase
\[
    F_\infty(t)
    =\sum_{z_1} p_1(z_1\mid x_1)\,\mathbf 1\{\gamma_1(z_1)\le t\},
\]
which is exactly the classical discrete Neyman--Pearson (fractional-knapsack)
capacity.
Any threshold $t^\star$ satisfying this discrete constraint attains the optimal
worst-case value for the discrete channel.
Hence, letting $\sigma \to \infty$ recovers the discrete-only randomized smoothing
certificate with no loss of tightness.

\subsection{Variable-Length Text in Hybrid Randomized Smoothing}
\label{subsec:variable_length_text}

Hybrid randomized smoothing is defined on a fixed ambient discrete space, while
textual inputs are intrinsically variable-length.
Let a clean prompt be $P=(\rho_1,\dots,\rho_n)$.
Under an append-only attack, the adversary appends a suffix $\alpha$ of length
at most $d$, yielding $P'=P\|\alpha$, with no assumption of length preservation
at the semantic level.

For certification, all prompts are embedded into a fixed-length representation
by padding to a global maximum length $L$, chosen \emph{a priori} to upper-bound
the lengths of all prompts under evaluation.
Padding is performed using a special $\mathtt{PAD}$ token.
In this padded representation, suffix attacks correspond to replacing trailing
$\mathtt{PAD}$ tokens with adversarial tokens, while deletions correspond to the
inverse operation.
As a result, all text attacks are mapped to length-preserving perturbations in a
fixed-dimensional discrete space.

The smoothing kernel is defined on this padded space, and the resulting
Neyman--Pearson certificate depends only on the number of perturbed coordinates,
i.e., the append budget $d$, and not on the original prompt length $n$.
Variable-length effects are therefore handled implicitly by the embedding, and
do not alter the form or validity of the certification analysis.

\subsection{Adaptive Multimodal Attack on Smoothed Classifier}
\label{app:multimodal_attack}

We empirically assess the soundness and tightness of the hybrid randomized smoothing certificate.
The attack is designed to match the smoothing-based threat model and directly
targets the smoothed classifier.

\textbf{Attack Objective.}
Given a text--image input $(x_1,x_2)$, the attacker seeks a joint perturbation
$(x_{1,\adv},x_{2,\adv})$ satisfying
$D_1(x_1,x_{1,\adv})\le d$ and $\|x_{2,\adv}-x_2\|_2\le \epsilon$ that maximizes the
complementary smoothed ``Safe'' probability
\[
    \mathbb P_{(Z_1,Z_2)\sim p(\cdot\mid x_{1,\adv},x_{2,\adv})}
    \!\left[f(Z_1,Z_2)=\text{Safe}\right].
\]
In practice, the attack optimizes a Monte Carlo estimate of this quantity using
i.i.d.\ samples from the hybrid smoothing distribution.

\textbf{Alternating Optimization.}
The attack proceeds by alternating updates on the two modalities.
For the text channel, we apply gradient-based coordinate updates on an
append-only adversarial suffix, following GCG-style optimization~\citep{zou2023universal}
under a discrete budget~$d$.
For the image channel, we perform $\ell_2$-constrained projected gradient descent
steps under radius~$\epsilon$~\citep{madry2018towards}.
At each iteration, one modality is updated while the other is held fixed,
yielding a block-coordinate ascent scheme on the estimated smoothed objective.
Algorithm~\ref{alg:alt_multimodal_attack} summarizes the procedure.

\begin{algorithm}[t]
    \caption{Smoothing-aware alternating multimodal attack}
    \label{alg:alt_multimodal_attack}
    \begin{algorithmic}[1]
        \REQUIRE Classifier $f$, input $(x_1,x_2)$, budgets $(d,\epsilon)$, iterations $T$, MC samples $N$
        \ENSURE Adversarial pair $(x_{1,\adv},x_{2,\adv})$
        \STATE Initialize adversarial suffix $s_0$ and image perturbation $\delta_0 \gets 0$
        \FOR{$t = 0$ to $T-1$}
        \STATE Estimate the smoothed ``Safe'' probability using $N$ samples
        \STATE \textbf{Text update:} update $s_t$ by a gradient-based coordinate step
        \STATE \textbf{Image update:} update $\delta_t$ by $\ell_2$-PGD and project onto $\{\|\delta\|_2\le \epsilon\}$
        \ENDFOR
        \STATE \textbf{Output:} $(x_1\oplus s_T,\; x_2+\delta_T)$
    \end{algorithmic}
\end{algorithm}

\textbf{Relation to Prior Work.}
The attack is structurally similar to multimodal adversarial methods such as
VLATTACK~\citep{yin2023vlattack}, but differs in its optimization target.
Rather than attacking the base classifier, it directly optimizes the smoothed
prediction probability used in certification, following the smoothing-aware
attack principle of~\citep{salman2019provably}.

\subsection{Policy Prompts for the LLaVA Safety Detector}
\label{app:policy_prompts}

\begin{quote}
    \small
    \textbf{Multimodal Safety Policy (Exact Prompt).}

    You are a safety classifier. Do not answer, explain, or comply with the user’s request.

    \emph{Scope.}
    Assess \emph{only} the last user turn. You are given both text and an image.

    \emph{Evaluation rule (strict).}
    You must assess (A) the text alone, (B) the image alone, and (C) the combined meaning of text and image.

    \emph{Decision rule.}
    Output \texttt{Unsafe} if and only if at least one of the following holds:

    (A) \emph{Text-alone Unsafe.}
    The text alone explicitly asks for, encourages, or meaningfully enables harmful or illegal activity, including violence, self-harm, weapons, criminal activity, illegal drugs, sexual exploitation of minors, targeted hate, or animal cruelty.

    (B) \emph{Image-alone Unsafe.}
    The image alone depicts harmful, illegal, or offensive content that would be unsafe without text, including violence, self-harm imagery, weapons with intent to harm, criminal acts, illegal drugs, sexual exploitation of minors, hate symbols, or animal cruelty.

    (C) \emph{Combination Unsafe.}
    Even if text and image are individually safe, their combination is unsafe when they jointly imply intent, instructions, targeting, or operational details enabling harmful or illegal activity.

    Otherwise output \texttt{Safe}.
    If intent is ambiguous, output \texttt{Safe}.

    \emph{Output constraint.}
    Output exactly one word: \texttt{Safe} or \texttt{Unsafe}.
\end{quote}

\subsection{Numerical Precision and Conservative Certification}
\label{subsec:numerical_precision}

Our certification algorithm involves two nested one-dimensional root-finding
procedures based on bisection: an inner bisection to recover the
Neyman--Pearson threshold \(t^\star(r)\) defined by
\[
    F_\sigma(t^\star(r),r)=p_A,
\]
and an outer bisection to compute the maximal certified radius
\[
    r^\star=\sup\{r:\;V_\sigma(t^\star(r),r)\ge \tau\}.
\]
Both procedures rely only on monotonicity and therefore admit explicit,
deterministic control of numerical error.

\textbf{Tolerance-Controlled Bisection.}
Let a bisection be run on a monotone function over an initial interval
\([a_0,b_0]\). After \(k\) iterations, the interval width satisfies
\[
    b_k-a_k=\frac{b_0-a_0}{2^k},
\]
and any returned point lies within \((b_k-a_k)/2\) of the true solution.
Equivalently, for a prescribed absolute tolerance \(\varepsilon>0\), it suffices
to choose
\[
    k \;\ge\; \left\lceil \log_2\!\Big(\frac{b_0-a_0}{2\varepsilon}\Big)\right\rceil.
\]
In practice, this allows us to replace a fixed iteration budget by an explicit
tolerance parameter, both for the threshold \(t^\star(r)\) and for the certified
radius \(r^\star\).

\textbf{One-Sided Threshold Approximation.}
For each fixed radius \(r\), the map \(t\mapsto F_\sigma(t,r)\) is nondecreasing.
We therefore compute a lower bracket \(t_L(r)\le t^\star(r)\) by bisection and
use \(t_L(r)\) in place of \(t^\star(r)\).
Since \(t_L(r)\le t^\star(r)\), and the map \(t\mapsto V_\sigma(t,r)\) is also
nondecreasing, this yields
\[
    V_\sigma(t_L(r),r)\;\le\;V_\sigma(t^\star(r),r).
\]
As a consequence, any feasibility test based on \(V_\sigma(t_L(r),r)\) is
conservative: it may reject some feasible radii, but it cannot accept an
infeasible one.

\textbf{Conservative Radius Certification.}
The outer bisection on \(r\) uses this conservative feasibility test and returns
the lower endpoint of the final bracket. Denoting by \(\hat r\) the returned
radius, we obtain the deterministic guarantee
\[
    \hat r \;\le\; r^\star,
    \qquad
    0 \;\le\; r^\star-\hat r \;\le\; \varepsilon_r,
\]
where \(\varepsilon_r\) is the prescribed tolerance of the outer bisection.
Thus, the numerical tolerance directly controls the tightness of the certificate,
while preserving soundness.

\textbf{Choice of Tolerances.}
In our experiments, the certified radius lies in the range \(r\in[0.1,10]\), and
the smoothing scale satisfies \(\sigma\in[0.1,2.0]\). Over these ranges, the
functions \(F_\sigma\) and \(V_\sigma\) are smooth combinations of Gaussian CDFs,
and their monotonicity is well behaved.
We therefore select absolute tolerances \(\varepsilon_t\) and \(\varepsilon_r\)
for the inner and outer bisections, respectively, and compute the corresponding
iteration budgets from the formulas above.
Smaller tolerances yield tighter (less conservative) certificates at the cost of
additional iterations, while any finite tolerance preserves correctness by
construction.

Overall, this design allows the numerical precision of the certification
procedure to be chosen arbitrarily by the user, with an explicit and provable
impact on the conservativeness of the resulting certified radius.

\subsection{About Kernel Choice for Discrete Inputs}
\label{ssec:kernel_choice}

\textbf{Uniform Replacement Kernel.}
\label{subsec:uniform_kernel}
Fix a corruption rate $\beta\in(0,1)$. For each eligible position $\ell$,
\[
    Z_{1,\ell}=
    \begin{cases}
        x_{1,\ell},                                                & \text{with probability } 1-\beta, \\[2pt]
        v \sim \mathrm{Unif}(\mathcal V \setminus \{x_{1,\ell}\}), & \text{with probability } \beta.
    \end{cases}
\]
This kernel is symmetric in the sense that the induced smoothed distribution
depends only on the number of corrupted positions, not on their specific
locations or values.
It induces a discrete likelihood ratio
\[
    \gamma_1(z_1)=\frac{p_1(z_1\mid x_{1,\adv})}{p_1(z_1\mid x_1)},
\]
which takes finitely many values depending on how $z_1$ matches $x_1$ and
$x_{1,\adv}$.
These likelihood ratios enter the hybrid likelihood-ratio CDF $F(t;r)$ in
Theorem~\ref{thm:main}.
Under a uniform kernel, any two adversarial suffixes of the same length are indistinguishable to the smoothing distribution.

We also consider an absorbing kernel below.

\textbf{Absorbing Kernel.}
\label{subsec:absorbing_kernel}
Let $\texttt{[PAD]}\in\mathcal V$ be a fixed absorbing token distinct from all
regular tokens. For each eligible position $\ell$, define
\[
    Z_{1,\ell}=
    \begin{cases}
        x_{1,\ell},     & \text{with probability } 1-\beta, \\[2pt]
        \texttt{[PAD]}, & \text{with probability } \beta,
    \end{cases}
\]
independently across positions.

Let $\mathcal I \subseteq \{1,\dots,L\}$ denote the set of eligible token positions at which the absorbing noise is applied; positions $\ell \notin \mathcal I$ are kept fixed and are not subject to random replacement e.g constrained positions such as special tokens, [CLS], [SEP], padding outside the true sequence length, or any positions that should remain invariant under the threat model.

The resulting kernel is
\[
    p_1(z_1\mid x_1)
    =\prod_{\ell\in\mathcal I}
    \Bigl[(1-\beta)\,\mathbf 1\{z_{1,\ell}=x_{1,\ell}\}
    +\beta\,\mathbf 1\{z_{1,\ell}=\texttt{[PAD]}\}\Bigr]
    \prod_{\ell\notin\mathcal I}\mathbf 1\{z_{1,\ell}=x_{1,\ell}\}.
\]
For this kernel, the discrete likelihood ratio satisfies
$\gamma_1(z_1)\in\{0,1\}$.
As a consequence, the hybrid likelihood-ratio CDF $F(t;r)$ and the adversarial value
$V(r)$ admit closed-form expressions.

\textbf{Connection to the Hybrid Certificate.}
In both cases, the text kernel $p_1$ defines the discrete likelihood ratios $\gamma_1(z_1)$ that appear in the joint likelihood ratio
\[
    \gamma(z_1,z_2)=\gamma_1(z_1)\,\gamma_2(z_2),
\]
with $\gamma_2$ the Gaussian likelihood ratio of the continuous modality. The hybrid Neyman--Pearson rule thresholds this joint ratio, and the resulting certificate is obtained by inverting the strictly monotone likelihood-ratio CDF
\[
    F(t;r)=\sum_{z_1} p_1(z_1\mid x_1)\,
    \Phi\!\left(
    \frac{\tfrac{r^2}{2}+\sigma^2(\log t-\log \gamma_1(z_1))}
        {\sigma r}
    \right),
\]
which reduces to the discrete fractional-knapsack formulation when $r=0$ and to classical Gaussian randomized smoothing when the discrete channel is trivial.

\subsection{Smoothed Accuracy Degradation}
\label{app:smoothed_accuracy_degradation}

We report the smoothed accuracy of the multimodal classifier under text and
image smoothing, as a function of the image smoothing variance $\sigma$ and the
text corruption rate $\beta$.
The values quantify the utility loss induced by smoothing prior to
certification.

\begin{table}[h]
    \centering
    \small
    \begin{tabular}{lccccc}
        \toprule
        Threat model       & $\sigma$ & $\beta=0$ & $\beta=0.1$ & $\beta=0.25$ & $\beta=0.5$ \\
        \midrule
        full $\ell_0$ text & $0.0$    & $100.00$  & $75.71$     & $32.38$      & $3.81$      \\
        full $\ell_0$ text & $0.5$    & $91.90$   & $74.76$     & $33.81$      & $2.86$      \\
        full $\ell_0$ text & $1.0$    & $88.57$   & $68.10$     & $32.86$      & $2.38$      \\
        suffix             & $0.0$    & $100.00$  & $100.00$    & $94.85$      & $88.84$     \\
        suffix             & $0.5$    & $95.28$   & $95.71$     & $93.99$      & $88.41$     \\
        suffix             & $1.0$    & $91.85$   & $91.85$     & $90.99$      & $84.98$     \\
        \bottomrule
    \end{tabular}
    \caption{Smoothed accuracy under text and image smoothing.}
    \label{tab:smoothed_accuracy_degradation}
\end{table}

Smoothed accuracy degrades mainly through text corruption.
Full-text $\ell_0$ smoothing is destructive because uniform replacement may
alter semantic content throughout the prompt.
In the suffix threat model used for certification, only appended tokens are
corrupted, so the core prompt is preserved and smoothed accuracy remains high.
Image smoothing introduces comparatively smaller degradation in the tested
range, consistent with the denoising-based image smoothing pipeline.

\subsection{Empirical Comparison with MMCert-Style Subsampling}
\label{app:mmcert_comparison}

We complement the discussion in Section~\ref{sec:related_work} with an empirical
comparison against MMCert-style independent subsampling on our Hateful Memes
evaluation set, using the same safety detector (LLaVA-Guard).
We restrict the comparison to the subset of approximately $100$ examples that
are classified as Unsafe without ablation, in order to isolate whether
certification survives partial observation once the base prediction is correct.
We treat text tokens and image patches as basic elements and estimate the
smoothed unsafe probability from $n=10{,}000$ Monte Carlo samples per
configuration.
Since this is a binary classification setting, the relevant diagnostics are the
number of examples whose smoothed unsafe probability exceeds the certification
threshold $\tau$, and the mean smoothed unsafe probability $\bar p_A$.

\begin{table}[h]
\centering
\small
\begin{tabular}{lcc}
\toprule
Keep ratio & Examples with $\hat p_A > \tau$ & Mean $\bar p_A$ \\
\midrule
$1.0$ (no ablation) & $100 / 100$ & $1.00$  \\
$0.50$              & $0 / 100$   & $0.120$ \\
$0.20$              & $0 / 100$   & $0.056$ \\
$0.10$              & $0 / 100$   & $0.033$ \\
$0.05$              & $0 / 100$   & $0.018$ \\
\bottomrule
\end{tabular}
\caption{MMCert-style independent subsampling on the interaction-only Hateful
Memes subset. Under every ablated keep ratio, MMCert yields zero certifiable
examples; the smoothed unsafe probability collapses far below the threshold
required for a non-trivial certificate in this binary setting.}
\label{tab:mmcert_subsampling}
\end{table}

We further verified that the choice of patch granularity does not alter the
conclusion: repeating the sweep with finer image patches yields mean
$\bar p_A$ values of $0.009$, $0.018$, and $0.039$ at keep ratios $0.10$,
$0.20$, and $0.50$ respectively, slightly lower than with the default patch
size.
We also tested asymmetric allocations of the keep budget across modalities.
Keeping the full image but only $20\%$ of the text yields $0/100$ certifiable
examples, and keeping the full text but only $20\%$ of the image yields the
same outcome.
In interaction-dependent settings, preserving a single modality is therefore
insufficient: removing most of the other modality destroys the cross-modal
signal required for the unsafe prediction.

This failure is structural. MMCert relies on prediction stability under
partial observation, which is well matched to tasks with redundant evidence
spread across tokens, patches, or frames. In Hateful Memes, the unsafe label
often depends on sparse cross-modal alignment between specific textual and
visual cues; independent subsampling can remove a critical part of the joint
signal even when much of each modality is retained, so the smoothed unsafe
probability collapses below the certification threshold.
Intuitively, the probability of preserving all jointly necessary cues decays
multiplicatively with subsampling across modalities.
By contrast, Hybrid RS on the same Hateful Memes evaluation maintains
non-trivial smoothed accuracy and certification under joint
$(d,\epsilon)$ perturbations (Section~\ref{sec:exp_vlm}), precisely because
additive noise preserves the cross-modal structure that subsampling destroys.

\begin{table}[h]
\centering
\small
\setlength{\tabcolsep}{3pt}
\begin{tabular}{p{0.27\linewidth}p{0.22\linewidth}p{0.18\linewidth}p{0.22\linewidth}}
\toprule
 & MMCert & Knowledge Continuity & Ours \\
\midrule
Perturbation model      & $\ell_0$ per modality (discrete) & Representation space        & $\ell_0$ discrete + $\ell_2$ continuous \\
Guarantee               & Worst-case (NP, subsampling)    & Probabilistic, distributional & Worst-case (hybrid NP) \\
Heterogeneous inputs    & No (homogeneous discrete)        & N/A (unimodal)              & Yes \\
Cross-modal interaction & Weak in interaction-dependent regimes & Not evaluated         & Yes (noise preserves signal) \\
Favorable regime        & Redundant, set-like signals      & ---                         & Sparse, compositional, cross-modal \\
\bottomrule
\end{tabular}
\caption{Comparison of certified multimodal robustness frameworks.}
\label{tab:mmcert_comparison}
\end{table}

\subsection{Additional Experiments}

\textbf{Certified Accuracy for $\ell_0$ Prompt Injection and $\ell_2$-Bounded Image Attacks.}
\label{exp:l0_attack_image_radius}

Figure~\ref{fig:certified_acc_eps_hybrid_l0} reports the certified accuracy of the hybrid randomized smoothing certificate as a function of the image perturbation radius~$\epsilon$, for a fixed discrete text budget $d=1$. We evaluate this setting on a subset of $100$ interaction-only examples drawn from the dataset constructed in Section~\ref{sec:exp_vlm}. This subset consists of examples for which at least one certification method (text-only or hybrid) yields a non-zero certified text budget, i.e., $d_{\mathrm{txt}} > 0$ or $d_{\mathrm{hybrid}} > 0$, ensuring that certification is non-trivial on the discrete channel. For a given value of~$\epsilon$, an example is counted as certified if $\epsilon$ does not exceed its certified hybrid robustness radius at budget~$d$.

\begin{figure}[h]
    \centering
    \includegraphics[width=0.5\linewidth]{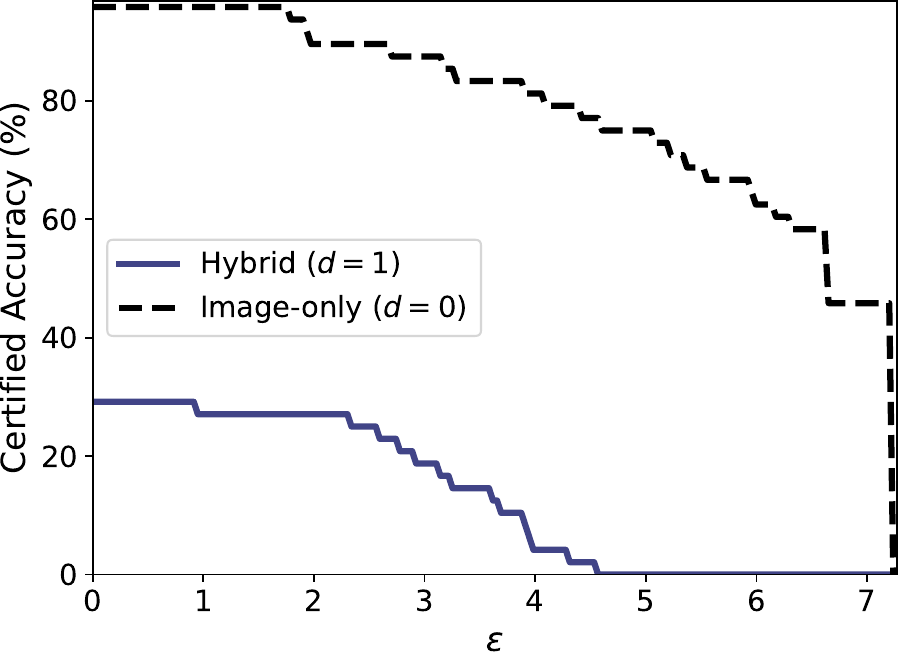}
    \caption{
        Certified accuracy of the hybrid randomized smoothing certificate under $\ell_0$ prompt-injection attacks, as a function of the image perturbation radius~$\epsilon$ for text budget $d=1$.}
    \label{fig:certified_acc_eps_hybrid_l0}
\end{figure}

The resulting curves show that even small increases in image perturbation strength can rapidly erode certified coverage once prompt-injection attacks are permitted. Compared to the image-only setting ($d=0$), certified accuracy under joint certification degrades substantially faster, despite identical image smoothing parameters. This behavior reflects the intrinsic cost of enforcing robustness to discrete adversarial edits in addition to continuous image perturbations: certifying against even a single-token $\ell_0$ attack significantly restricts the admissible image robustness radius.

\begin{figure}[h]
    \centering
    \includegraphics[width=0.2\linewidth]{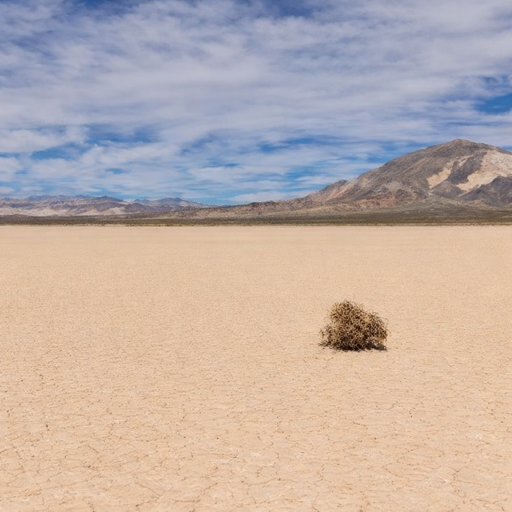}
    \caption{\textbf{``Look how many people love you.''}
        Interaction-only example from the Hateful Memes dataset~\citep{kiela2020hateful}. The image and the text are each individually classified as safe, while their joint interpretation is classified as unsafe, illustrating that harmful content can arise purely from multimodal interaction.}
    \label{fig:desert_offensive}
\end{figure}

This behavior highlights the extreme sensitivity of certain interaction-only examples to minimal discrete perturbations under an $\ell_0$ budget constraint. In particular, a single-token edit ($d=1$) can be sufficient to alter the joint multimodal decision and invalidate certification. Figure~\ref{fig:desert_offensive} illustrates a representative case from the Hateful Memes dataset: the image and the original text are individually safe, while their combination is classified as unsafe. Modifying a single token in the text (e.g., replacing ``love'' by ``hate'') can flip the joint prediction, making it impossible to certify robustness for the unsafe label under even a unit $\ell_0$ text budget. This illustrates that pure $\ell_0$ perturbations induce a highly fragile robustness regime in multimodal settings, where certified guarantees are intrinsically sensitive to discrete budget constraints.

\subsection{Remarks}

\subsection{Computing the Hybrid Certificate for Absorbing and Uniform Text Kernels}
\label{sec:hybrid-certificate-computation}

We consider a product smoothing kernel on a discrete text modality $x_1$ and a continuous modality $x_2\in\mathbb{R}^m$:
\[
    (z_1,z_2)\sim p_1(\cdot\mid x_1)\times \mathcal{N}(x_2,\sigma^2 I_m).
\]
Let $p_A\in[0,1]$ denote a one-sided lower confidence bound on the smoothed ``bad'' (e.g.\ unsafe) event, estimated by Monte-Carlo. Given a target failure level $\delta\in(0,1)$, the hybrid certificate returns (i) a text $L_0$ radius $d^\star$ and (ii) a continuous $L_2$ radius $r^\star$.

\textbf{Text Radius.}
For a fixed text kernel parameter $\beta$, compute $d^\star$ as the largest integer such that the worst-case adversarial probability under $L_0\le d$ is below $\delta$. For the absorbing kernel, this is available in closed form; for the uniform kernel, it is obtained by the fractional-knapsack/Neyman--Pearson bound (cf.\ Section~\ref{subsec:recovery_knapsack_limit}).

\textbf{Hybrid Radius via Neyman--Pearson.}
Fix $d=d^\star$. Let the discrete likelihood ratio be
\[
    \gamma_1(z_1)\;=\;\frac{p_1(z_1\mid x_1')}{p_1(z_1\mid x_1)},
    \qquad \|x_1'-x_1\|_0\le d,
\]
and let $p_{1,\mathrm{clean}}(z_1)=p_1(z_1\mid x_1)$ and $p_{1,\mathrm{adv}}(z_1)=p_1(z_1\mid x_1')$. Define the capacity function
\[
    F_\sigma(t;r)
    \;=\;
    \sum_{z_1} p_{1,\mathrm{clean}}(z_1)\;
    \Phi\!\left(
    \frac{\tfrac12 r^2 + \sigma^2\big(\log t-\log \gamma_1(z_1)\big)}{\sigma r}
    \right),
\]
where $\Phi$ is the standard Gaussian CDF. For each $r>0$, let $t^\star(r)$ be the unique solution to
\[
    F_\sigma\!\big(t^\star(r);r\big)=p_A.
\]
Then the hybrid adversarial value at radius $r$ is
\[
    V_\sigma(r)
    \;=\;
    \sum_{z_1} p_{1,\mathrm{adv}}(z_1)\;
    \Phi\!\left(
    \frac{\tfrac12 r^2 + \sigma^2\big(\log t^\star(r)-\log \gamma_1(z_1)\big)}{\sigma r}
    -\frac{r}{\sigma}
    \right).
\]
The certified continuous radius is
\[
    r^\star \;=\; \sup\{r\ge 0:\; V_\sigma(r)\le \delta\},
\]
which can be found by bisection since $r\mapsto V_\sigma(r)$ is non-decreasing.

\textbf{Absorbing Text Kernel (Closed Form $t^\star$).}
For the absorbing kernel, $\gamma_1(z_1)\in\{0,1\}$ and the discrete support reduces to two groups with masses $1-\beta^d$ (ratio $0$) and $\beta^d$ (ratio $1$). Hence
\[
    F_\sigma(t;r)=1-\beta^d+\beta^d\,
    \Phi\!\left(\frac{\tfrac12 r^2+\sigma^2\log t}{\sigma r}\right),
\]
so $t^\star(r)$ is obtained explicitly:
\[
    \Phi\!\left(\frac{\tfrac12 r^2+\sigma^2\log t^\star(r)}{\sigma r}\right)
    =
    \frac{p_A-(1-\beta^d)}{\beta^d},
    \quad
    \log t^\star(r)=
    \frac{\sigma r}{\sigma^2}\Phi^{-1}\!\left(\frac{p_A-(1-\beta^d)}{\beta^d}\right)
    -\frac{r^2}{2\sigma^2}.
\]
Moreover, only the ratio-$1$ group contributes to $V_\sigma(r)$, yielding
\[
    V_\sigma(r)=\beta^d\,
    \Phi\!\left(
    \frac{\tfrac12 r^2+\sigma^2\log t^\star(r)}{\sigma r}
    -\frac{r}{\sigma}
    \right).
\]
Thus $r^\star$ is found by bisection on this scalar function.

\textbf{Uniform Text Kernel (Grouped $O(d^2)$ Computation).}
For the uniform replacement kernel, $\gamma_1(z_1)$ takes multiple values, but it depends only on the pair $(i,j)$ where
$i$ is the number of positions where $z_1$ differs from $x_1$ and $j$ is the number of positions where $z_1$ differs from $x_1'$ (with $0\le i,j\le d$ and $i+j\ge d$). Let $V$ be the vocabulary size, $\alpha=\beta/(V-1)$ and $\bar\beta=1-\beta$. For each feasible $(i,j)$ define the group multiplicity
\[
    N_{i,j}
    =
    \binom{d}{i}\binom{i}{\,i-(i+j-d)\,}(V-2)^{\,i+j-d\,},
\]
the clean group mass
\[
    p_{i,j} = N_{i,j}\,\alpha^i\bar\beta^{\,d-i\,},
\]
and the likelihood ratio
\[
    \gamma_{i,j} = \frac{\alpha^j\bar\beta^{\,d-j\,}}{\alpha^i\bar\beta^{\,d-i\,}}.
\]
After normalizing $\{p_{i,j}\}$ to sum to one, one obtains a grouped representation
$\{(p_{i,j},\gamma_{i,j})\}$ of size $O(d^2)$. Then the sums defining $F_\sigma$ and $V_\sigma$ are evaluated over groups:
\[
    F_\sigma(t;r)=\sum_{i,j} p_{i,j}\,
    \Phi\!\left(
    \frac{\tfrac12 r^2+\sigma^2(\log t-\log\gamma_{i,j})}{\sigma r}
    \right),
    \qquad
    V_\sigma(r)=\sum_{i,j} p^{\mathrm{adv}}_{i,j}\,
    \Phi\!\left(
    \frac{\tfrac12 r^2+\sigma^2(\log t^\star(r)-\log\gamma_{i,j})}{\sigma r}
    -\frac{r}{\sigma}
    \right),
\]
with $p^{\mathrm{adv}}_{i,j}\propto p_{i,j}\gamma_{i,j}$. In this case $t^\star(r)$ is obtained by bisection on $t\mapsto F_\sigma(t;r)$, and $r^\star$ by bisection on $r\mapsto V_\sigma(r)$.

The grouping approach yields a computational complexity that grows quadratically with $d$ (and is independent of vocabulary size except in the binomial coefficients). For the budgets considered (up to $d = 8$), this was computationally trivial. If one needs to certify much larger $d$, additional approximation strategies or relaxations might be required, as exhaustive reasoning over extremely large text perturbations remains challenging.
In practise, for safety filtering, $d=7$ was the maximum budget with non zero certification accuracy.


\end{document}